\icmltitlerunning{Zeno: Distributed Stochastic Gradient Descent with Suspicion-based Fault-tolerance}
\begin{document}

\def\Blue{\color{blue}}
\def\Purple{\color{purple}}

\def\A{{\bf A}}
\def\a{{\bf a}}
\def\B{{\bf B}}
\def\C{{\bf C}}
\def\c{{\bf c}}
\def\D{{\bf D}}
\def\d{{\bf d}}
\def\F{{\bf F}}
\def\e{{\bf e}}
\def\f{{\bf f}}
\def\G{{\bf G}}
\def\H{{\bf H}}
\def\I{{\bf I}}
\def\K{{\bf K}}
\def\L{{\bf L}}
\def\M{{\bf M}}
\def\m{{\bf m}}
\def\N{{\bf N}}
\def\n{{\bf n}}
\def\Q{{\bf Q}}
\def\q{{\bf q}}
\def\S{{\bf S}}
\def\s{{\bf s}}
\def\T{{\bf T}}
\def\U{{\bf U}}
\def\u{{\bf u}}
\def\V{{\bf V}}
\def\v{{\bf v}}
\def\W{{\bf W}}
\def\w{{\bf w}}
\def\X{{\bf X}}
\def\x{{\bf x}}
\def\Y{{\bf Y}}
\def\y{{\bf y}}
\def\Z{{\bf Z}}
\def\z{{\bf z}}
\def\0{{\bf 0}}
\def\1{{\bf 1}}

\def\AM{{\mathcal A}}
\def\CM{{\mathcal C}}
\def\DM{{\mathcal D}}
\def\GM{{\mathcal G}}
\def\FM{{\mathcal F}}
\def\IM{{\mathcal I}}
\def\NM{{\mathcal N}}
\def\OM{{\mathcal O}}
\def\SM{{\mathcal S}}
\def\TM{{\mathcal T}}
\def\UM{{\mathcal U}}
\def\XM{{\mathcal X}}
\def\YM{{\mathcal Y}}
\def\RB{{\mathbb R}}

\def\TX{\tilde{\bf X}}
\def\tx{\tilde{\bf x}}
\def\ty{\tilde{\bf y}}
\def\TZ{\tilde{\bf Z}}
\def\tz{\tilde{\bf z}}
\def\hd{\hat{d}}
\def\HD{\hat{\bf D}}
\def\hx{\hat{\bf x}}
\def\TD{\tilde{\Delta}}

\def\alp{\mbox{\boldmath$\alpha$\unboldmath}}
\def\bet{\mbox{\boldmath$\beta$\unboldmath}}
\def\epsi{\mbox{\boldmath$\epsilon$\unboldmath}}
\def\etab{\mbox{\boldmath$\eta$\unboldmath}}
\def\ph{\mbox{\boldmath$\phi$\unboldmath}}
\def\pii{\mbox{\boldmath$\pi$\unboldmath}}
\def\Ph{\mbox{\boldmath$\Phi$\unboldmath}}
\def\Ps{\mbox{\boldmath$\Psi$\unboldmath}}
\def\tha{\mbox{\boldmath$\theta$\unboldmath}}
\def\Tha{\mbox{\boldmath$\Theta$\unboldmath}}
\def\muu{\mbox{\boldmath$\mu$\unboldmath}}
\def\Si{\mbox{\boldmath$\Sigma$\unboldmath}}
\def\si{\mbox{\boldmath$\sigma$\unboldmath}}
\def\Gam{\mbox{\boldmath$\Gamma$\unboldmath}}
\def\Lam{\mbox{\boldmath$\Lambda$\unboldmath}}
\def\De{\mbox{\boldmath$\Delta$\unboldmath}}
\def\Ome{\mbox{\boldmath$\Omega$\unboldmath}}
\def\TOme{\mbox{\boldmath$\hat{\Omega}$\unboldmath}}
\def\vps{\mbox{\boldmath$\varepsilon$\unboldmath}}
\newcommand{\ti}[1]{\tilde{#1}}
\def\Ncal{\mathcal{N}}
\def\argmax{\mathop{\rm argmax}}
\def\argmin{\mathop{\rm argmin}}
\providecommand{\abs}[1]{\lvert#1\rvert}
\providecommand{\norm}[2]{\lVert#1\rVert_{#2}}

\def\Zs{{\Z_{\mathrm{S}}}}
\def\Zl{{\Z_{\mathrm{L}}}}
\def\Yr{{\Y_{\mathrm{R}}}}
\def\Yg{{\Y_{\mathrm{G}}}}
\def\Yb{{\Y_{\mathrm{B}}}}
\def\Ar{{\A_{\mathrm{R}}}}
\def\Ag{{\A_{\mathrm{G}}}}
\def\Ab{{\A_{\mathrm{B}}}}
\def\As{{\A_{\mathrm{S}}}}
\def\Asr{{\A_{\mathrm{S}_{\mathrm{R}}}}}
\def\Asg{{\A_{\mathrm{S}_{\mathrm{G}}}}}
\def\Asb{{\A_{\mathrm{S}_{\mathrm{B}}}}}
\def\Or{{\Ome_{\mathrm{R}}}}
\def\Og{{\Ome_{\mathrm{G}}}}
\def\Ob{{\Ome_{\mathrm{B}}}}

\def\vec{\mathrm{vec}}
\def\fold{\mathrm{fold}}
\def\index{\mathrm{index}}
\def\sgn{\mathrm{sgn}}
\def\tr{\mathrm{tr}}
\def\rk{\mathrm{rank}}
\def\diag{\mathsf{diag}}
\def\const{\mathrm{Const}}
\def\dg{\mathsf{dg}}
\def\st{\mathsf{s.t.}}
\def\vect{\mathsf{vec}}
\def\MCAR{\mathrm{MCAR}}
\def\MSAR{\mathrm{MSAR}}
\def\etal{{\em et al.\/}\,}
\newcommand{\indep}{{\;\bot\!\!\!\!\!\!\bot\;}}

\def\Lsize{\hbox{\space \raise-2mm\hbox{$\textstyle \L \atop \scriptstyle {m\times 3n}$} \space}}
\def\Ssize{\hbox{\space \raise-2mm\hbox{$\textstyle \S \atop \scriptstyle {m\times 3n}$} \space}}
\def\Osize{\hbox{\space \raise-2mm\hbox{$\textstyle \Ome \atop \scriptstyle {m\times 3n}$} \space}}
\def\Tsize{\hbox{\space \raise-2mm\hbox{$\textstyle \T \atop \scriptstyle {3n\times n}$} \space}}
\def\Bsize{\hbox{\space \raise-2mm\hbox{$\textstyle \B \atop \scriptstyle {m\times n}$} \space}}

\newcommand{\twopartdef}[4]
{
	\left\{
		\begin{array}{ll}
			#1 & \mbox{if } #2 \\
			#3 & \mbox{if } #4
		\end{array}
	\right.
}

\newcommand{\tabincell}[2]{\begin{tabular}{@{}#1@{}}#2\end{tabular}}

\newtheorem{theorem}{Theorem}
\newtheorem{lemma}{Lemma}
\newtheorem{proposition}{Proposition}
\newtheorem{corollary}{Corollary}
\newtheorem{definition}{Definition}
\newtheorem{remark}{Remark}

\def\E{{\mathbb E}}
\def\R{{\mathbb R}}

\DeclarePairedDelimiter\ceil{\lceil}{\rceil}
\DeclarePairedDelimiter\floor{\lfloor}{\rfloor}

\newcommand{\ip}[2]{\left\langle #1, #2 \right \rangle}

\newtheorem{assumption}{Assumption}

\def\aggr{{\tt Aggr}}
\def\mean{{\tt Mean}}
\def\median{{\tt Median}}
\def\krum{{\tt Krum}}
\def\zeno{{\tt Zeno}}

\twocolumn[
\icmltitle{Zeno: Distributed Stochastic Gradient Descent with Suspicion-based Fault-tolerance}




\begin{icmlauthorlist}
\icmlauthor{Cong Xie}{UIUC}
\icmlauthor{Oluwasanmi Koyejo}{UIUC}
\icmlauthor{Indranil Gupta}{UIUC}
\end{icmlauthorlist}

\icmlaffiliation{UIUC}{Department of Computer Science, University of Illinois, Urbana-Champaign, USA}

\icmlcorrespondingauthor{Cong Xie}{cx2@illinois.edu}

\icmlkeywords{Robust, SGD}

\vskip 0.3in
]



\printAffiliationsAndNotice{}

\begin{abstract}
We present Zeno, a technique to make distributed machine learning, particularly Stochastic Gradient Descent (SGD), tolerant to an arbitrary number of faulty workers. Zeno generalizes previous results that assumed a majority of non-faulty nodes; we need assume only one non-faulty worker. Our key idea is to suspect workers that are potentially defective. Since this is likely to lead to false positives, we use a ranking-based preference mechanism. We prove the convergence of SGD for non-convex problems under these scenarios. Experimental results show that Zeno outperforms existing approaches.
\end{abstract}

\section{Introduction}

In distributed machine learning, one of the hardest problems today is fault-tolerance. Faulty workers may take arbitrary actions or modify their portion of the data and/or models arbitrarily. In addition to adversarial attacks on purpose, it is also common for the workers to have hardware or software failures, such as bit-flipping in the memory or communication media. While fault-tolerance has been studied for distributed machine learning~\cite{blanchard2017machine,Chen2017DistributedSM,yin2018byzantine,feng2014distributed,Su2016FaultTolerantMO,su2016defending,alistarh2018byzantine}, much of the work on fault-tolerant machine learning makes strong assumptions. For instance, a common assumption is that no more than 50\% of the workers are faulty ~\cite{blanchard2017machine,Chen2017DistributedSM,yin2018byzantine,Su2016FaultTolerantMO,alistarh2018byzantine}. 

We present Zeno, a new technique that generalizes the failure model so that we only require at least one non-faulty (good) worker. 
In particular, faulty gradients may pretend to be good by behaving similar to the correct gradients in variance and magnitude, making them hard to distinguish. It is also possible that in different iterations, different groups of workers are faulty, which means that we can not simply identify workers which are always faulty.

\begin{figure}[htb!]
\centering
\includegraphics[width=0.49\textwidth]{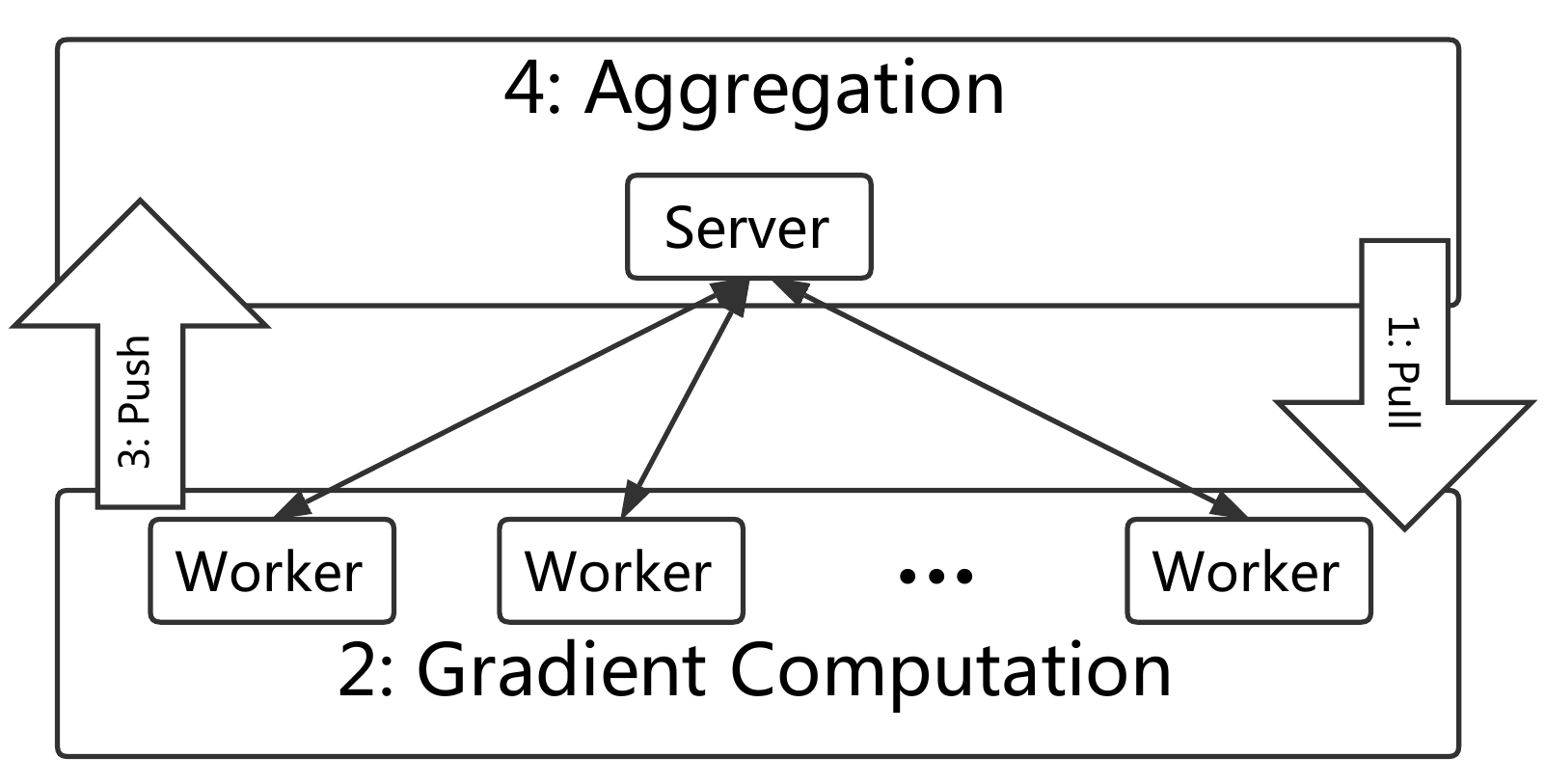}
\caption{Parameter Server architecture.}
\label{fig:ps}
\end{figure}

We focus on Stochastic Gradient Descent (SGD), and use the Parameter Server (PS) architecture~\cite{li2014scaling,li2014communication} for distributed SGD. As illustrated in Figure 1,  processes are composed of the server nodes and worker nodes. In each SGD iteration, the workers pull the latest model from the servers, estimate the gradients using the locally sampled training data, then push the gradient estimators to the servers. The servers aggregate the gradient estimators, and update the model by using the aggregated gradients.

Our approach, in a nutshell is the following. We treat each candidate gradient estimator as a suspect. We compute a score using a stochastic zero-order oracle. This ranking indicates how trustworthy the given worker is in that iteration. Then, we take the average over the several candidates with the highest scores. This allows us to tolerate a large number of incorrect gradients. We prove that the convergence is as fast as fault-free SGD. Further, the variance falls as the number of non-faulty workers increases. 

To the best of our knowledge this paper is the first to theoretically and empirically study cases where a majority of workers are faulty for non-convex problems. In summary, our contributions are:

\setitemize[0]{leftmargin=*}
\begin{itemize} 
\item A new approach for SGD with fault-tolerance, that works with an arbitrarily large number of faulty nodes as long as there is at least one non-faulty node.
\item Theoretically, the proposed algorithm converges as fast as distributed synchronous SGD without faulty workers, with the same asymptotic time complexity.
\item Experimental results validating that 1) existing majority-based robust algorithms may fail even when the number of faulty workers is lower than the majority, and 2) Zeno gracefully handles such cases.
\item The effectiveness of Zeno also extends to the case where the workers use disjoint local data to train the model, i.e., the local training data are not identically distributed across different workers. Theoretical and experimental analysis is also provided in this case. 
\end{itemize}

\section{Related Work}

Many approaches for improving failure tolerance are based on robust statistics. For instance, \citet{Chen2017DistributedSM,Su2016FaultTolerantMO,su2016defending} use geometric median as the aggregation rule. \citet{yin2018byzantine} establishes statistical error rates for marginal trimmed mean as the aggregation rule. Similar to these papers, our proposed algorithm also works under Byzantine settings.

There are also robust gradient aggregation rules that are not based on robust statistics. For example, \citet{blanchard2017machine} propose Krum, which select the candidates with minimal local sum of Euclidean distances. 
DRACO~\cite{chen2018draco} uses coding theory to ensure robustness.

\citet{alistarh2018byzantine} proposes a fault-tolerant SGD variant different from the robust aggregation rules. The algorithm utilizes historical information, and achieves the optimal sample complexity. 

Despite their differences, the existing majority-based methods for synchronous SGD~\cite{blanchard2017machine,Chen2017DistributedSM,yin2018byzantine,Su2016FaultTolerantMO,alistarh2018byzantine} 
assume that the non-faulty workers dominate the entire set of workers. Thus, such algorithms can trim the outliers from the candidates. However, in real-world failures or attacks, there are no guarantees that the number of faulty workers can be bounded from above. 

\section{Model}
\label{sec:model}
We consider the following optimization problem:
\begin{align*}
\min_{x \in \R^d} F(x), 
\end{align*}
where $F(x) = \E_{z \sim \mathcal{D}}[f(x; z)]$, $z$ is sampled from some unknown distribution $\mathcal{D}$, $d$ is the number of dimensions. We assume that there exists a minimizer of $F(x)$, which is denoted by $x^*$.

We solve this problem in a distributed manner with $m$ workers. In each iteration, each worker will sample $n$ independent and identically distributed~(i.i.d.) data points from the distribution $\mathcal{D}$, and compute the gradient of the local empirical loss $F_i(x) = \frac{1}{n} \sum_{j=1}^n f(x; z^{i,j}), \forall i \in [m]$, where $z^{i,j}$ is the $j$th sampled data on the $i$th worker. The servers will collect and aggregate the gradients sent by the works, and update the model as follows:
\begin{align*}
x^{t+1} = x^t - \gamma^t \aggr(\{g_i(x^t): i \in [m]\}),
\end{align*}
where $\aggr(\cdot)$ is an aggregation rule (e.g., averaging), and 
\begin{align}
\label{equ:byz_grad}
g_i(x^t) = 
\begin{cases}
* & \mbox{$i$th worker is faulty}, \\
\nabla F_i(x^t) & \mbox{otherwise,}
\end{cases}
\end{align}
where ``$*$" represents arbitrary values.

Formally, we define the failure model in synchronous SGD as follows.
\begin{definition} 
\label{def:byz}
(Failure Model). In the $t^{\mbox{th}}$ iteration, let $\{v_i^t: i \in [m]\}$ be i.i.d. random vectors in $\R^d$, where $v_i^t = \nabla F_i(x^t)$. The set of correct vectors $\{v_i^t: i \in [m]\}$ is partially replaced by faulty vectors, which results in $\{\tilde{v}_i^t: i \in [m]\}$, where $\tilde{v}_i^t = g_i(x^t)$ as defined in Equation~(\ref{equ:byz_grad}). In other words, a correct/non-faulty gradient is $\nabla F_i(x^t)$, while a faulty gradient, marked as ``$*$", is assigned arbitrary value.  We assume that $q$ out of $m$ vectors are faulty, where $q < m$. Furthermore, the indices of faulty workers can change across different iterations. 
\end{definition}
We observe that in the worst case, the failure model in Definition~\ref{def:byz} is equivalent to the Byzantine failures introduced in \citet{blanchard2017machine,Chen2017DistributedSM,yin2018byzantine}. In particular, if the failures are caused by attackers, the failure model includes the case where the attackers collude.

To help understand the failure model in synchronous SGD, we illustrate a toy example in Figure~\ref{fig:byz_example}.

The notations used in this paper is summarized in Table~\ref{tbl:notations}.
\begin{figure}[htb!]
\centering
\includegraphics[width=0.45\textwidth]{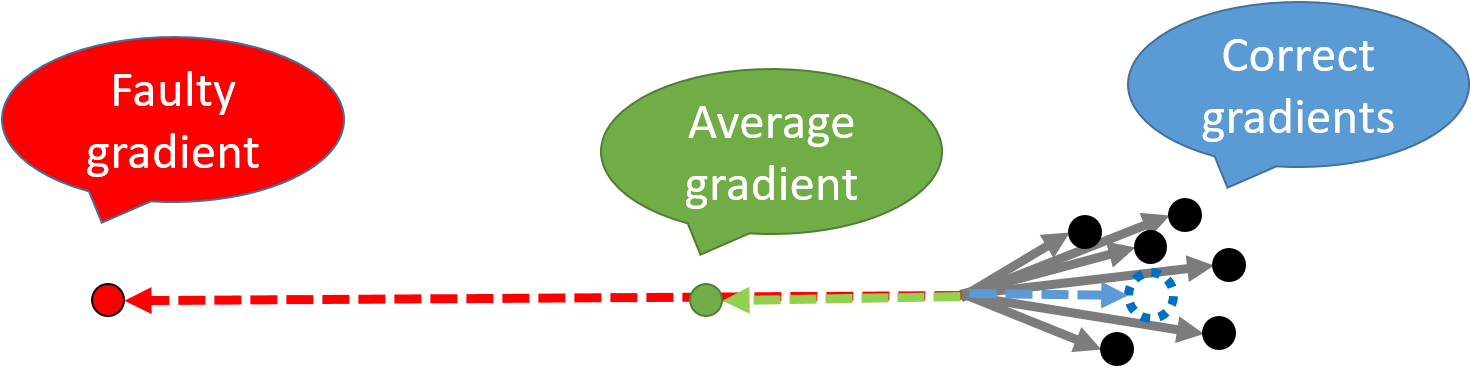}
\caption{A toy example of the failure model in synchronous SGD. There are $m=7$ candidate gradient estimators. The black dots represent the correct gradients, where $\tilde{v}_i = \nabla F_i(x^t)$, $i \in [m-1]$. The red dot represents the faulty gradient, whose value (in the worst case) is $\tilde{v}_m = \epsilon \nabla F_m(x^t)$, where $\epsilon<0$ is a large negative constant. The blue dashed circle represent the expectation of the true gradient $\nabla F(x^t)$. Thus, the averaged gradient, which will be computed by the server, represented by the green dot, is far away from the true gradient, which is harmful to the model training.}
\label{fig:byz_example}
\end{figure}
\vspace{-0.3cm}
\begin{table}[htb]
\caption{Notations}
\label{tbl:notations}
\begin{center}
\begin{small}
\begin{tabular}{|l|l|}
\hline 
Notation  & Description \\ \hline
$m$    & Number of workers \\ \hline
$n$    & Number of samples on each worker \\ \hline
$T$    & Number of epochs  \\ \hline
$[m]$    & Set of integers $\{1, \ldots, m \}$  \\ \hline
$q$    & Number of faulty workers  \\ \hline
$b$    & Trim parameter of \texttt{Zeno}  \\ \hline
$\gamma$    & Learning rate  \\ \hline
$\rho$    & Regularization weight of \texttt{Zeno}  \\ \hline
$n_r$    & Batch size of \texttt{Zeno}   \\ \hline
$\| \cdot \|$    & All the norms in this paper are $l_2$-norms  \\ \hline
\end{tabular}
\end{small}
\end{center}
\vspace{-0.3cm}
\end{table}

\section{Methodology}
In contrast to the existing majority-based methods, we compute a score for each candidate gradient estimator by using the stochastic zero-order oracle. We rank each candidate gradient estimator based on the estimated descent of the loss function, and the magnitudes. Then, the algorithm aggregates the candidates with highest scores. The score roughly indicates how trustworthy each candidate is.
\begin{definition}(Stochastic Descendant Score)
\label{def:score}
Denote $f_r(x) = \frac{1}{n_r} \sum_{i=1}^{n_r} f(x; z_i)$, where $z_i$'s are i.i.d. samples drawn from $\mathcal{D}$, and $n_r$ is the batch size of $f_r(\cdot)$. $\E[f_r(x)] = F(x)$. For any update~(gradient estimator) $u$, based on the current parameter $x$, learning rate $\gamma$, and a constant weight $\rho > 0$, we define its \textit{stochastic descendant score} as follows:
\begin{align*}
Score_{\gamma, \rho}(u, x) = f_r(x) - f_r(x - \gamma u) - \rho \| u \|^2.
\end{align*} 
\end{definition}
The score defined in Definition~\ref{def:score} is composed of two parts: the estimated descendant of the loss function, and the magnitude of the update. The score increases when the estimated descendant of the loss function, $f_r(x) - f_r(x - \gamma \tilde{v}_i)$, increases. The score decreases when the magnitude of the update, $\| \tilde{v}_i \|^2$, increases. Intuitively, the larger descendant suggests faster convergence, and the smaller magnitude suggests a smaller change. Even if a gradient is faulty, a smaller change makes it less harmful and easier to be cancelled by the correct gradients.

Using the score defined above, we establish the following suspicion-based aggregation rule. We ignore the index of iterations, $t$, for convenience.

\begin{algorithm}[htb!]
\caption{Zeno}
\begin{algorithmic}
\vspace{0.2cm}
\STATE{\large\underline{\textbf{Server}}} 
\STATE Input: $\rho$~(defined in Definition~\ref{def:score}), $b$~(defined in Definition~\ref{def:zeno})
\STATE $x^0 \leftarrow rand()$ \COMMENT{Initialization}
\FOR{$t = 1, \ldots, T$}
	\STATE Broadcast $x^{t-1}$ to all the workers
	\STATE Wait until all the gradients $\{\tilde{v}_i^t: i \in [m]\}$ arrive
	\STATE Draw the samples for evaluating stochastic descendant score $f_r^t(\cdot)$ as defined in Definition~\ref{def:score}
	\STATE Compute $\bar{\tilde{v}}^t = \zeno_b(\{\tilde{v}_i^t: i \in [m]\})$ as defined in Definition~\ref{def:zeno}
	\STATE Update the parameter $x^t \leftarrow x^{t-1} - \gamma^t \bar{\tilde{v}}^t$
\ENDFOR
\end{algorithmic}
\begin{algorithmic}
\vspace{0.2cm}
\STATE{\large\underline{\textbf{Worker}} $i = 1, \ldots, m$} 
\FOR{$t = 1, \ldots, T$}
	\STATE Receive $x^{t-1}$ from the server
	\STATE Draw the samples, compute, and send the gradient $v_i^t = \nabla F_i^t(x^{t-1})$ to the server
\ENDFOR
\end{algorithmic}
\label{alg:zeno}
\end{algorithm}

\begin{definition}(Suspicion-based Aggregation)
\label{def:zeno}
Assume that among the gradient estimators $\{\tilde{v}_i: i \in [m]\}$, $q$ elements are faulty, and $x$ is the current value of the parameters. We sort the sequence by the stochastic descendant score defined in Definition~\ref{def:score}, which results in $\{\tilde{v}_{(i)}: i \in [m]\}$, where 
\begin{align*}
Score_{\gamma, \rho}(\tilde{v}_{(1)}, x) \geq \ldots \geq Score_{\gamma, \rho}(\tilde{v}_{(m)}, x).
\end{align*}
In other words, $\tilde{v}_{(i)}$ is the vector with the $i$th highest score in $\{\tilde{v}_i: i \in [m]\}$.

The proposed aggregation rule, \textit{Zeno}, aggregates the gradient estimators by taking the average of the first $m-b$ elements in $\{\tilde{v}_{(i)}: i \in [m]\}$~(the gradient estimators with the $(m-b)$ highest scores), where $m > b \geq q$: 
\begin{align*}
\zeno_b(\{\tilde{v}_i: i \in [m]\}) = \frac{1}{m-b} \sum_{i=1}^{m-b} \tilde{v}_{(i)}.
\end{align*}
\end{definition}

Note that $z_i$'s (in Definition~\ref{def:score}) are independently sampled in different iterations. Furthermore, in each iteration, $z_i$'s are sampled after the arrival of the candidate gradient estimators $\tilde{v}_i^t$ on the server. Since the faulty workers are not predictive, they cannot obtain the exact information of $f_r(\cdot)$, which means that the faulty gradients are independent of $f_r(\cdot)$, though the faulty workers can know $\E[f_r(\cdot)]$.

Using \texttt{Zeno} as the aggregation rule, the detailed distributed synchronous SGD is shown in Algorithm~\ref{alg:zeno}.

\begin{figure}[htb!]
\centering
\includegraphics[width=0.45\textwidth,height=4cm]{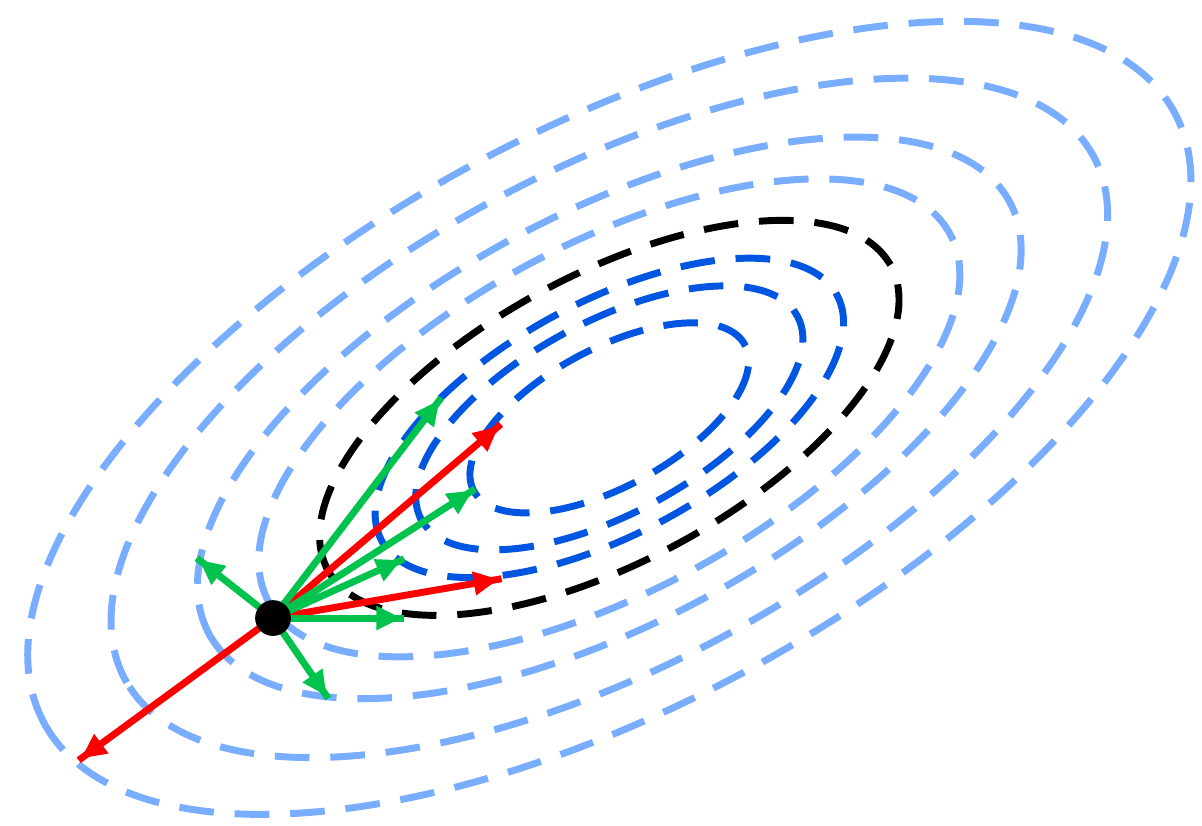}
\caption{\texttt{Zeno} on loss surface contours. We use the notations in Definition~\ref{def:score} and \ref{def:zeno}. The black dot is the current parameter $x$. The arrows are the candidate updates $\{\tilde{v}_i: i \in [m]\}$. Red arrows are the incorrect updates. Green arrows are the correct updates. Taking $b=3$, \texttt{Zeno} filters out the 3 arrows pointing outside the black dashed circle. These 3 updates have the least descendant of the loss function, among all the updates. There are some incorrect updates (the red arrow) remaining inside the boundary. However, since they are bounded by the correct updates, the remaining incorrect updates are harmless.}
\label{fig:contour}
\end{figure}

In Figure~\ref{fig:contour}, we visualize the intuition underlying \texttt{Zeno}. It is illustrated that all the selected candidates (arrows pointing inside the black dashed circle) are bounded by at least one honest candidate. In other words, \texttt{Zeno} uses at least one honest candidate to establish a boundary (the black dashed circle), which filter out the potentially harmful candidates. The candidates inside the boundary are harmless, no matter they are actually faulty or not.

\section{Theoretical Guarantees}

In this section, we prove the convergence of synchronous SGD with \texttt{Zeno} as the aggregation rule under our failure model.
We start with the assumptions required by the convergence guarantees. The two basic assumptions are the smoothness of the loss function, and the bounded variance of the (non-faulty) gradient estimators. 

\subsection{Assumptions} 

In this section, we highlight the necessary assumption for stochastic descendant score, followed by the assumptions for convergence guarantees. 

\begin{assumption} (Unbiased evaluation)
\label{asm:score}
We assume that the stochastic loss function, $f_r(x)$, evaluated in the stochastic descendant score in Definition~\ref{def:score}, is an unbiased estimator of the global loss function $F(x)$, i.e., $\E[f_r(x)] = F(x)$.
\end{assumption}


\begin{assumption} (Bounded Taylor's Approximation)
\label{asm:smooth_cvx}
We assume that $f(x;z)$ has $L$-smoothness and $\mu$-lower-bounded Taylor's approximation~(also called $\mu$-weak convexity):
$
\ip{\nabla f(x; z)}{y-x} + \frac{\mu}{2} \|y-x\|^2 \leq f(y;z) - f(x;z) 
\leq \ip{\nabla f(x; z)}{y-x} + \frac{L}{2} \|y-x\|^2,
$
where $\mu \leq L$, and $L > 0$.
\end{assumption}
Note that Assumption~\ref{asm:smooth_cvx} covers the case of non-convexity by taking $\mu < 0$, non-strong convexity by taking $\mu = 0$, and strong convexity by taking $\mu > 0$.

\begin{assumption} (Bounded Variance)
\label{asm:variance}
We assume that in any iteration, any correct gradient estimator $v_i = \nabla F_i(x)$  has the upper-bounded variance:
$
\E \left\|v_i - \E\left[ v_i \right] \right\|^2 \leq V.
$
Furthermore, we assume that 
$
\E\|v_i\|^2 \leq G.
$
\end{assumption}

In general, Assumption~\ref{asm:variance} bounds the variance and the second moment of the correct gradients of any sample loss function $f(x;z)$, $\forall z \sim \mathcal{D}$.

\begin{remark}
Note that for the faulty gradients in our failure model, none of the assumptions above holds.
\end{remark}

\subsection{Convergence Guarantees}

For general functions, including convex and non-convex functions, we provide the following convergence guarantee. The proof can be found in the appendix.
\begin{theorem}
\label{thm:byz_step}
For $\forall x \in \R^d$, denote
\begin{align*}
\tilde{v}_i = 
\begin{cases}
* & \mbox{$i$th worker is faulty}, \\
\nabla F_i(x) & \mbox{otherwise,}
\end{cases}
\end{align*}
where $i \in [m]$, with $\E[ \nabla F_i(x) ] = \nabla F(x)$, and $\bar{\tilde{v}} = \zeno_b(\{\tilde{v}_i: i \in [m]\})$.
Taking $\gamma \leq \frac{1}{L}$, $\rho = \frac{\beta \gamma^2}{2}$, and $\beta > \max(0, -\mu)$, we have 
\begin{align*}
&\E \left[ F(x - \gamma \bar{\tilde{v}}) \right] - F(x) \leq -\frac{\gamma}{2} \| \nabla F(x) \|^2 \\
&\quad + \frac{\gamma(b-q + 1)(m-q)V}{(m-b)^2} + \frac{(L+\beta)\gamma^2 G}{2}.
\end{align*}
\end{theorem}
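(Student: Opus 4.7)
The plan is to start from $L$-smoothness of $F$ applied at $x-\gamma\bar{\tilde{v}}$, which yields
\begin{align*}
F(x-\gamma\bar{\tilde{v}})-F(x) \leq -\gamma\langle\nabla F(x), \bar{\tilde{v}}\rangle + \frac{L\gamma^2}{2}\|\bar{\tilde{v}}\|^2,
\end{align*}
and then decompose $\bar{\tilde{v}} = \bar{v}_T + (\bar{\tilde{v}}-\bar{v}_T)$ against a reference average $\bar{v}_T$ built only from correct gradients. Applying the elementary identity $-\langle a,b\rangle \leq -\tfrac{1}{2}\|a\|^2 + \tfrac{1}{2}\|a-b\|^2$ to the $-\gamma\langle\nabla F(x), \bar{v}_T\rangle$ piece extracts the target $-\tfrac{\gamma}{2}\|\nabla F(x)\|^2$ descent together with a variance-type term $\tfrac{\gamma}{2}\|\bar{v}_T-\nabla F(x)\|^2$, leaving the residual $-\gamma\langle\nabla F(x), \bar{\tilde{v}}-\bar{v}_T\rangle$ and the quadratic $\tfrac{L\gamma^2}{2}\|\bar{\tilde{v}}\|^2$ to be controlled by the Zeno scoring mechanism.

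The reference set $T$ is built by pigeonhole. Let $S$ denote the $m-b$ indices that Zeno selects, $C$ the $m-q$ correct indices, $S_f=S\setminus C$ the faulty-but-selected indices, and $C_u=C\setminus S$ the correct-but-unselected indices. Because $b\geq q$, we have $|C_u|\geq|S_f|$, so I fix a bijection $\sigma\colon S_f\to T'\subseteq C_u$ and set $T=(S\cap C)\cup T'$, giving $\bar{v}_T=\tfrac{1}{m-b}\sum_{j\in T}v_j$. By Definition~\ref{def:zeno}, $\mathrm{Score}(\tilde{v}_i,x)\geq \mathrm{Score}(v_{\sigma(i)},x)$ for every $i\in S_f$; summing over the bijection, expanding the score definition, and invoking Assumption~\ref{asm:smooth_cvx} (the $L$-upper Taylor bound on each $f_r(x-\gamma v_{\sigma(i)})$ and the $\mu$-lower Taylor bound on each $f_r(x-\gamma\tilde{v}_i)$), the substitution $\rho=\beta\gamma^2/2$ converts the score comparison into
\begin{align*}
-\gamma\langle\nabla f_r(x), \bar{\tilde{v}}-\bar{v}_T\rangle \leq \frac{(L+\beta)\gamma^2}{2(m-b)}\sum_{j\in T'}\|v_j\|^2 - \frac{(\mu+\beta)\gamma^2}{2(m-b)}\sum_{i\in S_f}\|\tilde{v}_i\|^2.
\end{align*}
The hypothesis $\beta>-\mu$ forces the coefficient on $\|\tilde{v}_i\|^2$ to be nonnegative, so the negative term can be discarded; independence of $f_r$ from the faulty gradients, as noted after Definition~\ref{def:zeno}, lets me pass $\nabla f_r(x)$ to $\nabla F(x)$ in expectation.

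The remaining pieces are routine. For $\|\bar{v}_T-\nabla F(x)\|^2$ I would write $\sum_{j\in T}(v_j-\nabla F(x)) = \sum_{j\in C}(v_j-\nabla F(x)) - \sum_{j\in C\setminus T}(v_j-\nabla F(x))$, bound the first sum in expectation by $(m-q)V$ using independence of the correct gradients (Assumption~\ref{asm:variance}), and bound the second by $(b-q)\sum_{j\in C}\|v_j-\nabla F(x)\|^2$ via Cauchy-Schwarz over the at-most-$(b-q)$ indices of $C\setminus T$; combining with $\|a-b\|^2\leq 2\|a\|^2+2\|b\|^2$ and dividing by $(m-b)^2$ produces the factor $(b-q+1)(m-q)V/(m-b)^2$. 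Finally, $\gamma\leq 1/L$ together with the second-moment bound $G$ from Assumption~\ref{asm:variance} let $\tfrac{L\gamma^2}{2}\E\|\bar{\tilde{v}}\|^2$ together with the score-regularization residual be absorbed into the final $\tfrac{(L+\beta)\gamma^2 G}{2}$. The main obstacle throughout is that $S$, and hence $T$, $\bar{v}_T$, and $\bar{\tilde{v}}$, are all measurable with respect to both $f_r$ and the correct $v_i$'s, so $\bar{v}_T$ is not an unbiased estimator of $\nabla F(x)$ in the classical sense; this is resolved by the fact that the score-based inequality above is a deterministic pointwise statement, and that the Cauchy-Schwarz enumeration over $C\setminus T$ is a worst-case bound that holds uniformly in the adversarial choice of $T$, so the coupling drops out before the final expectation is taken.
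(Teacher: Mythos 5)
Your pigeonhole construction---matching each faulty-but-selected index to a distinct correct-but-unselected index and cancelling $S\cap C$---is a valid repackaging of the paper's Lemma~\ref{lem:score}, and your per-pair conversion of the score inequality into a Taylor-type bound on $-\gamma\ip{\nabla f_r(x)}{\bar{\tilde v}-\bar v_T}$ is algebraically correct as a deterministic statement. The architecture fails, however, at the very first step: applying $L$-smoothness of $F$ at the contaminated point $x-\gamma\bar{\tilde v}$ leaves you with $+\frac{L\gamma^2}{2}\|\bar{\tilde v}\|^2$, and $\bar{\tilde v}$ contains the selected faulty vectors, whose norms are not covered by Assumption~\ref{asm:variance}; $\E\|\bar{\tilde v}\|^2\le G$ is simply false. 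The only negative quantity available to absorb it is the discarded $-\frac{(\mu+\beta)\gamma^2}{2(m-b)}\sum_{i\in S_f}\|\tilde v_i\|^2$, whose coefficient $(\mu+\beta)/2$ is generically far smaller than the required $L/2$ (the theorem assumes only $\beta>\max(0,-\mu)$, not $\beta\ge L-\mu$), so the claimed absorption into $\frac{(L+\beta)\gamma^2 G}{2}$ does not go through.

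A second, independent gap is the passage from $\nabla f_r(x)$ to $\nabla F(x)$ in the cross term. Your smoothness decomposition needs a bound on $-\gamma\ip{\nabla F(x)}{\bar{\tilde v}-\bar v_T}$, but the score mechanism only controls $-\gamma\ip{\nabla f_r(x)}{\bar{\tilde v}-\bar v_T}$. The discrepancy $\gamma\ip{\nabla f_r(x)-\nabla F(x)}{\bar{\tilde v}-\bar v_T}$ does not vanish in expectation, because the selected set (hence $S_f$, $T'$, and $\bar{\tilde v}-\bar v_T$) is itself a function of $f_r$; nor can it be bounded pointwise, since $\bar{\tilde v}-\bar v_T$ contains arbitrary adversarial vectors, so Cauchy--Schwarz against the gradient noise gives nothing. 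Your closing remark about the coupling ``dropping out'' correctly describes the variance term over $C\setminus T$ (where a worst-case enumeration does work, exactly as in the paper's bound on $\E\|\nabla F(x)-\bar v\|^2$), but it does not address this inner product. (The paper's own expectation step glosses over a related dependence of the ranking on $f_r$, but there the offending factors involve only correct gradients and are second-moment bounded; in your version they are unbounded.) The paper avoids both problems by running the entire chain at the level of $f_r$ function values: it applies the $\mu$-lower Taylor bound centered at $x-\gamma\bar{\tilde v}$ so that the first-order terms cancel upon averaging, which---after substituting $\rho=\beta\gamma^2/2$ with $\beta\ge-\mu$---leaves every faulty norm with a non-positive coefficient to be discarded, and only then applies $L$-smoothness along the all-correct direction $\bar v$, so the surviving quadratic and inner-product terms involve correct gradients only. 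Repairing your argument essentially requires reorganizing it into that form.
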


\begin{corollary}
\label{cor:fixed_lr}
Take $\gamma = \frac{1}{L \sqrt{T}}$, $\rho = \frac{\beta\gamma^2}{2}$, and $\beta > \max(0, -\mu)$. Using $\zeno$, with  $\E[ \nabla F_i(x^t) ] = \nabla F(x^t)$ for $\forall t \in \{0, \ldots, T\}$, after $T$ iterations, we have
\begin{align*}
&\frac{\sum_{t=0}^{T-1} \E \| \nabla F(x^t) \|^2}{T} \\
&\leq  \mathcal{O}\left( \frac{1}{\sqrt{T}} \right) + \mathcal{O} \left( \frac{(b-q + 1)(m-q)}{(m-b)^2} \right).
\end{align*}
\end{corollary}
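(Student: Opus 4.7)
The plan is to apply Theorem~\ref{thm:byz_step} at each iteration $t$ with $x=x^t$, so that $x-\gamma\bar{\tilde v}$ becomes $x^{t+1}$, and then telescope the one-step descent inequality across $t=0,\ldots,T-1$. Since the stated learning rate $\gamma=\frac{1}{L\sqrt T}$ satisfies $\gamma\le\frac{1}{L}$ for every $T\ge 1$, and $\rho$ and $\beta$ are chosen exactly as Theorem~\ref{thm:byz_step} requires, the theorem applies verbatim at every step.

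First I would rearrange the conclusion of Theorem~\ref{thm:byz_step}, taking conditional expectation on the history through step $t$, to the familiar descent form
\begin{align*}
\frac{\gamma}{2}\E\|\nabla F(x^t)\|^2
\le F(x^t) - \E[F(x^{t+1})] + \frac{\gamma(b-q+1)(m-q)V}{(m-b)^2} + \frac{(L+\beta)\gamma^2 G}{2}.
\end{align*}
Next I would take total expectation, sum over $t=0,\ldots,T-1$, and telescope the first two terms, using that $F$ is bounded below by $F(x^*)$ (guaranteed by the existence of a minimizer stated in Section~\ref{sec:model}), so that $F(x^0)-\E[F(x^T)]\le F(x^0)-F(x^*)$.

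Finally I would divide through by $T\gamma/2$ to obtain
\begin{align*}
\frac{1}{T}\sum_{t=0}^{T-1}\E\|\nabla F(x^t)\|^2
\le \frac{2\bigl(F(x^0)-F(x^*)\bigr)}{T\gamma} + \frac{2(b-q+1)(m-q)V}{(m-b)^2} + (L+\beta)\gamma G,
\end{align*}
and then substitute $\gamma=\frac{1}{L\sqrt T}$. The first term becomes $\frac{2L(F(x^0)-F(x^*))}{\sqrt T}=\mathcal{O}(1/\sqrt T)$, the third term becomes $\frac{(L+\beta)G}{L\sqrt T}=\mathcal{O}(1/\sqrt T)$, and the middle term is already $\mathcal{O}\bigl((b-q+1)(m-q)/(m-b)^2\bigr)$; combining the two $1/\sqrt T$ contributions into a single big-$\mathcal{O}$ yields the claim.

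Because Theorem~\ref{thm:byz_step} has already absorbed the adversarial behaviour of the faulty workers into its residual terms, no step of this plan requires re-deriving anything about \texttt{Zeno}; the argument is a standard SGD telescoping computation. The only mild subtlety — and the closest thing to an obstacle — is making sure the conditional-expectation bookkeeping is clean, namely that the scoring samples drawn for $f_r^t$ are independent of everything used up to iteration $t$ (noted in the paper right after Definition~\ref{def:zeno}), so that Theorem~\ref{thm:byz_step} can be iterated by taking expectations under the tower property. Once this is in place, the rest is bookkeeping with the choice $\gamma=1/(L\sqrt T)$.
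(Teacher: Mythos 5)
Your proposal is correct and follows essentially the same route as the paper's own proof: apply Theorem~\ref{thm:byz_step} at each step with $x=x^t$, telescope the descent inequality over $t=0,\ldots,T-1$, bound $F(x^T)$ below by $F(x^*)$, and substitute $\gamma=\frac{1}{L\sqrt{T}}$. The remark about the independence of the scoring samples is a reasonable extra precaution but introduces no divergence from the paper's argument.
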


Now, we consider a more general case, where each worker has a disjoint (non-identically distributed) local dataset for training, which results in non-identically distributed gradient estimators. The server is still aware of the the entire dataset. For example, in volunteer computing~\cite{Meeds2015MLitBML,Miura2015ImplementationOA}, the server/coordinator can assign disjoint tasks/subsets of training data to the workers, while the server holds the entire training dataset. In this scenario, we have the following convergence guarantee.
\begin{corollary}
\label{cor:disjoint}
Assume that 
\begin{align*}
F(x) = \frac{1}{m} \sum_{i \in [m]} \E \left[ F_i(x) \right], \E \left[ F_i(x) \right] \neq \E \left[ F_j(x) \right],
\end{align*}
for $\forall i, j \in [m]$, $i \neq j$. For the stochastic descendant score, we have $\E \left[ f_r(x) \right] = F(x)$. Assumption~\ref{asm:score}, \ref{asm:smooth_cvx}, and \ref{asm:variance} hold.
Take $\gamma = \frac{1}{L \sqrt{T}}$, $\rho = \frac{\beta\gamma^2}{2}$, and $\beta > \max(0, -\mu)$. Using $\zeno$, after $T$ iterations, we have
\begin{align*}
&\frac{\sum_{t=0}^{T-1} \E \| \nabla F(x^t) \|^2}{T} \\
&\leq \mathcal{O}\left( \frac{1}{\sqrt{T}} \right) + \mathcal{O}\left(\frac{b}{m}\right) + \mathcal{O}\left(\frac{b^2(m-q)}{m^2(m-b)}\right).
\end{align*}
\end{corollary}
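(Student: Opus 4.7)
The plan is to redo the one-step descent of Theorem~\ref{thm:byz_step} under the weaker unbiasedness condition $\nabla F(x)=\frac{1}{m}\sum_{i=1}^{m}\mu_i(x)$, where $\mu_i(x):=\E[\nabla F_i(x)]$ now differs across workers. The \texttt{Zeno} score itself is unaffected because Assumption~\ref{asm:score} still yields $\E[f_r(x)]=F(x)$, so the ranking machinery of Theorem~\ref{thm:byz_step} can be reused verbatim; what is new is the bias of any size-$(m-b)$ subset mean with respect to the full mean $\nabla F(x)$.

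First I would start from the smoothness inequality
\begin{align*}
\E[F(x^{t+1})]-F(x^t)\leq -\gamma\langle\nabla F(x^t),\E[\bar{\tilde v}^t]\rangle+\tfrac{L\gamma^2}{2}\E\|\bar{\tilde v}^t\|^2
\end{align*}
and partition the $m-b$ selected candidates into honest indices $H$ and faulty indices $A$ with $|A|\leq q$. For each $i\in A$, the ranking guarantees $Score_{\gamma,\rho}(\tilde v_i^t,x^t)\geq Score_{\gamma,\rho}(v_j^t,x^t)$ for some rejected honest $j$ (at least $b-q$ honest workers are dropped, so such a pairing exists). Assumption~\ref{asm:smooth_cvx} together with the independence of $f_r$ from the candidate gradients then converts this score comparison into a deterministic bound on $-\langle\nabla F(x^t),\tilde v_i^t\rangle$ and $\|\tilde v_i^t\|^2$ in terms of the paired honest $v_j^t$; this pairing step is essentially the one used in Theorem~\ref{thm:byz_step}.

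The genuinely new ingredient is the bias of the honest subset mean. For any $H$ with $|H|=m-b-|A|$, I would use the identity
\begin{align*}
\frac{1}{m-b}\sum_{i\in H}\mu_i(x^t)-\nabla F(x^t)=\frac{1}{m(m-b)}\Bigl(b\sum_{i\in H}\mu_i(x^t)-(m-b)\sum_{i\notin H}\mu_i(x^t)\Bigr),
\end{align*}
whose norm is $\mathcal{O}(b\sqrt{G}/m)$ by Assumption~\ref{asm:variance} (which gives $\|\mu_i\|^2\leq G$). Splitting the cross-term $\langle\nabla F(x^t),\E[\bar{\tilde v}^t]-\nabla F(x^t)\rangle$ via Young's inequality to absorb $\tfrac{1}{2}\|\nabla F(x^t)\|^2$ into the descent, and combining with the $V$-variance contribution of the zero-mean residuals $v_i^t-\mu_i(x^t)$ routed through the faulty-to-honest pairing, yields a per-step bound of the schematic form
\begin{align*}
\E[F(x^{t+1})]-F(x^t)\leq -\tfrac{\gamma}{2}\|\nabla F(x^t)\|^2+\gamma\mathcal{O}\bigl(\tfrac{b}{m}\bigr)+\gamma\mathcal{O}\bigl(\tfrac{b^2(m-q)}{m^2(m-b)}\bigr)+\mathcal{O}(\gamma^2 LG).
\end{align*}
Telescoping over $t=0,\dots,T-1$, dividing by $T\gamma/2$, and substituting $\gamma=1/(L\sqrt{T})$, $\rho=\beta\gamma^2/2$ turns the $\gamma^2 LG$ term into the $\mathcal{O}(1/\sqrt{T})$ rate while the other two survive as stated.

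The main obstacle, I expect, is the bias step. In the proof of Theorem~\ref{thm:byz_step} every honest mean coincides with $\nabla F$, so the pairing of a selected faulty $\tilde v_i$ with a rejected honest $v_j$ only injects zero-mean random fluctuation; in the disjoint-data case each such pairing also injects the deterministic discrepancy $\mu_j-\nabla F$, and one must separate the bias from the $V$-bounded noise cleanly—keeping the bias contribution at $\mathcal{O}(b/m)$ rather than letting it cross-couple with the variance term and inflate it—by a careful application of Young's inequality together with the independence of $f_r$ from the candidate gradients.
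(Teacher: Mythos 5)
Your high-level plan coincides with the paper's: the one-step descent of Theorem~\ref{thm:byz_step} is reused verbatim up to the inequality
$\E[F(x-\gamma\bar{\tilde v})]-F(x)\leq-\tfrac{\gamma}{2}\|\nabla F(x)\|^2+\tfrac{\gamma}{2}\E\|\nabla F(x)-\bar v\|^2+\tfrac{(L+\beta)\gamma^2G}{2}$,
the only new work is bounding $\E\|\nabla F(x)-\bar v\|^2$ when dropping $b$ workers biases the mean, and the telescoping is identical to Corollary~\ref{cor:fixed_lr}. Your bias identity for $\frac{1}{m-b}\sum_{i\in H}\mu_i-\nabla F(x)$ is correct and even gives a slightly sharper $\mathcal{O}(b^2G/m^2)$ for the squared bias of the means. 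The problem is exactly the step you flag as the main obstacle, and your sketch does not resolve it: if you handle the zero-mean residuals $v_i-\mu_i$ of the score-selected honest subset by ``routing through the faulty-to-honest pairing,'' i.e.\ by the adversarial-subset variance bound of Theorem~\ref{thm:byz_step}, you get an extra term $\frac{2(b-q+1)(m-q)V}{(m-b)^2}$. That term is \emph{not} dominated by $\mathcal{O}(b/m)+\mathcal{O}\bigl(\frac{b^2(m-q)}{m^2(m-b)}\bigr)$ in all regimes (take $b=m-1$, $q=0$: it is of order $mV$ while the stated terms are $\mathcal{O}(1)$), so the corollary as stated would not follow from your argument.

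The paper avoids this with a different decomposition that you are missing. It writes
$\nabla F(x)-\bar v=\bigl(\frac{1}{m}\sum_{i\in[m]}\E[\nabla F_i(x)]-\frac{1}{m}\sum_{i=1}^{m-b}v_{(i)}\bigr)+\bigl(\frac{1}{m}-\frac{1}{m-b}\bigr)\sum_{i=1}^{m-b}v_{(i)}$,
and then completes $\frac{1}{m}\sum_{i=1}^{m-b}v_{(i)}$ to the \emph{full, unselected} sum $\frac{1}{m}\sum_{i=1}^{m}\nabla F_i(x)$ minus the $b$ removed gradients $\mathcal{S}_1$. The crucial point is that the $1/m$ normalization is kept throughout this step: the full unselected average contributes only $4V/m$ (ordinary i.i.d.-style averaging, no selection inflation), the removed set enters only through its second moment, $\frac{4b^2}{m^2}\cdot\frac{mG}{b}=\frac{4bG}{m}$, and the renormalization from $1/m$ to $1/(m-b)$ contributes $\frac{2b^2(m-q)G}{m^2(m-b)}$. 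In other words, the adversarial selection is only ever charged against the second moment $G$ of a small ($b$-element) removed set, never against the variance of a selected $(m-b)$-element average; that is what keeps the bound at the three stated terms. Without this (or an equivalent) rearrangement, your per-step bound carries a fourth term and the proof of the corollary is incomplete.
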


These two corollaries tell us that when using \texttt{Zeno} as the aggregation rule, even if there are failures, the convergence rate can be as fast as fault-free distributed synchronous SGD. The variance decreases when the number of workers $m$ increases, or the estimated number of faulty workers $b$ decreases.

\begin{remark}
There are two practical concerns for the proposed algorithm. 
First, by increasing the batch size of $f_r(\cdot)$~($n_r$ in Definition~\ref{def:score}), the stochastic descendant score will be potentially more stable. However, according to Theorem~\ref{thm:byz_step} and Corollary~\ref{cor:fixed_lr} and \ref{cor:disjoint}, the convergence rate is independent of the variance of $f_r$. Thus, theoretically we can use a single sample to evaluate the stochastic descendant score. 
Second, theoretically we need larger $\rho$ for non-convex problems. However, larger $\rho$ makes \texttt{Zeno} less sensitive to the descendant of the loss function, which potentially increases the risk of aggregating harmful candidates. In practice, we can use a small $\rho$ by assuming the local convexity of the loss functions. 
\end{remark}

\subsection{Implementation Details: Time Complexity}
Unlike the majority-based aggregation rules, the time complexity of \texttt{Zeno} is not trivial to analyze. Note that the convergence rate is independent of the variance of $f_r$, which means that we can use a single sample~($n_r = 1$) to evaluate $f_r$ to achieve the same convergence rate. Furthermore, in general, when evaluating the loss function on a single sample, the time complexity is roughly linear to the number of parameters $d$. Thus, informally, the time complexity of \texttt{Zeno} is $\mathcal{O}(dm)$ for one iteration, which is the same as \texttt{Mean} and \texttt{Median} aggregation rules. For comparison, note that the time complexity of \texttt{Krum} is $\mathcal{O}(dm^2)$.

\begin{figure*}[htb!]
\centering
\subfigure[Top-1 accuracy on testing set, with $q=8$]{\includegraphics[width=0.49\textwidth,height=3.9cm]{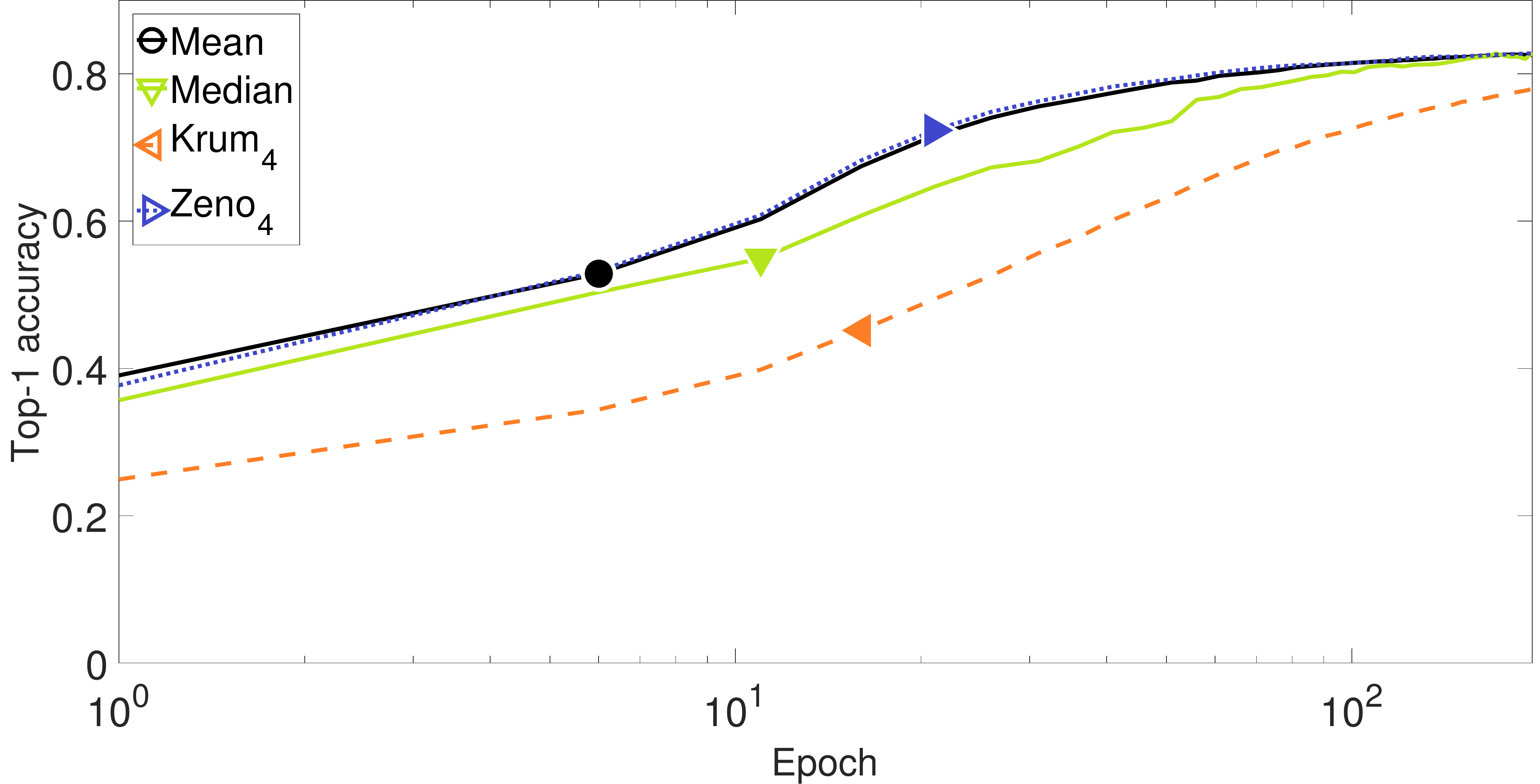}}
\subfigure[Cross entropy on training set, with $q=8$]{\includegraphics[width=0.49\textwidth,height=3.9cm]{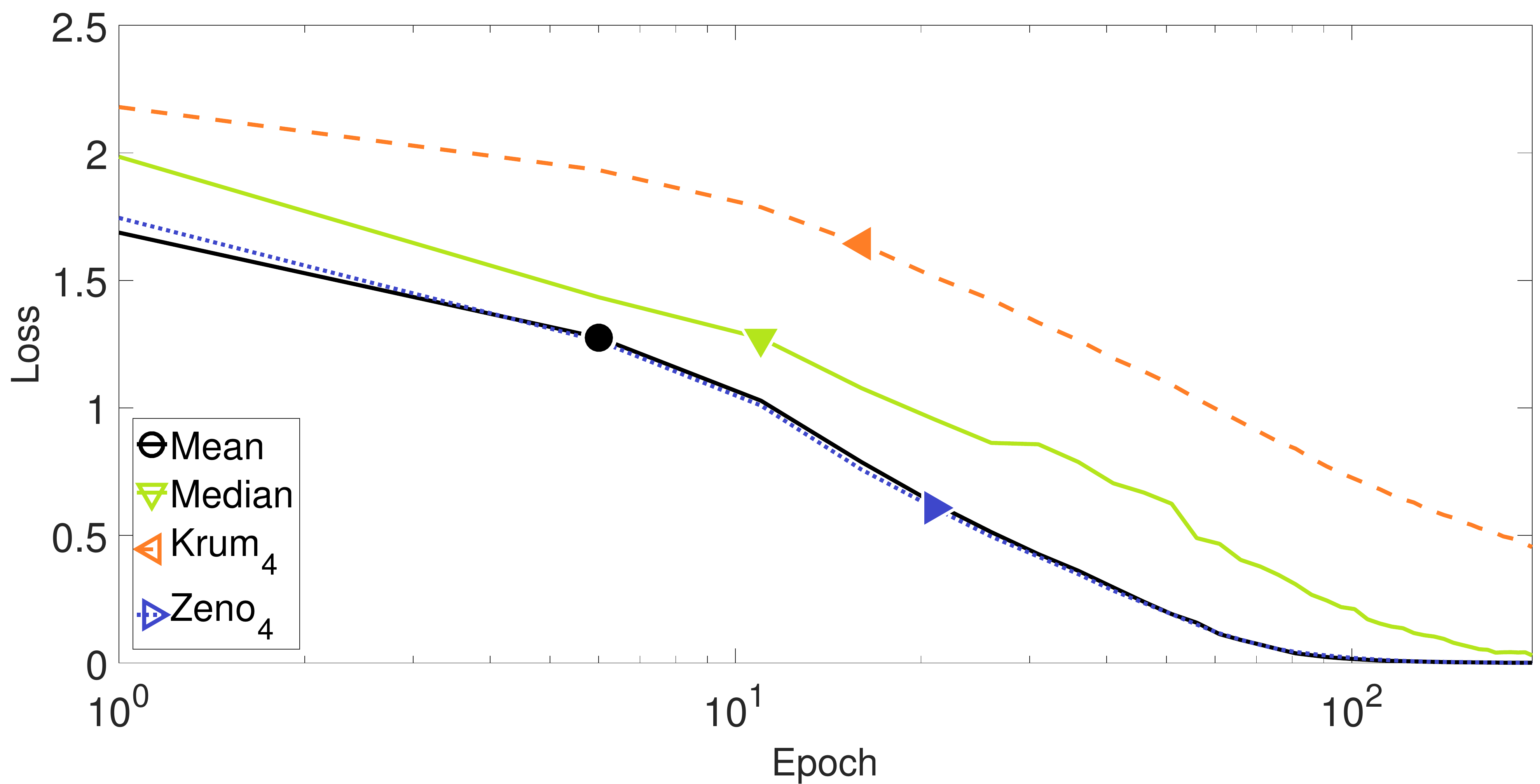}}
\caption{Convergence on i.i.d. training data, without failures. Batch size on the workers is $100$. Batch size of \texttt{Zeno} is $n_r=4$. $\rho=0.0005$. $\gamma = 0.1$.  Each epoch has 25 iterations. \texttt{Zeno} performs similar to \texttt{Mean}.}
\label{fig:iid_nobyz}
\end{figure*}
\begin{figure*}[htb!]
\centering
\subfigure[Top-1 accuracy on testing set, with $q=8$]{\includegraphics[width=0.49\textwidth,height=3.9cm]{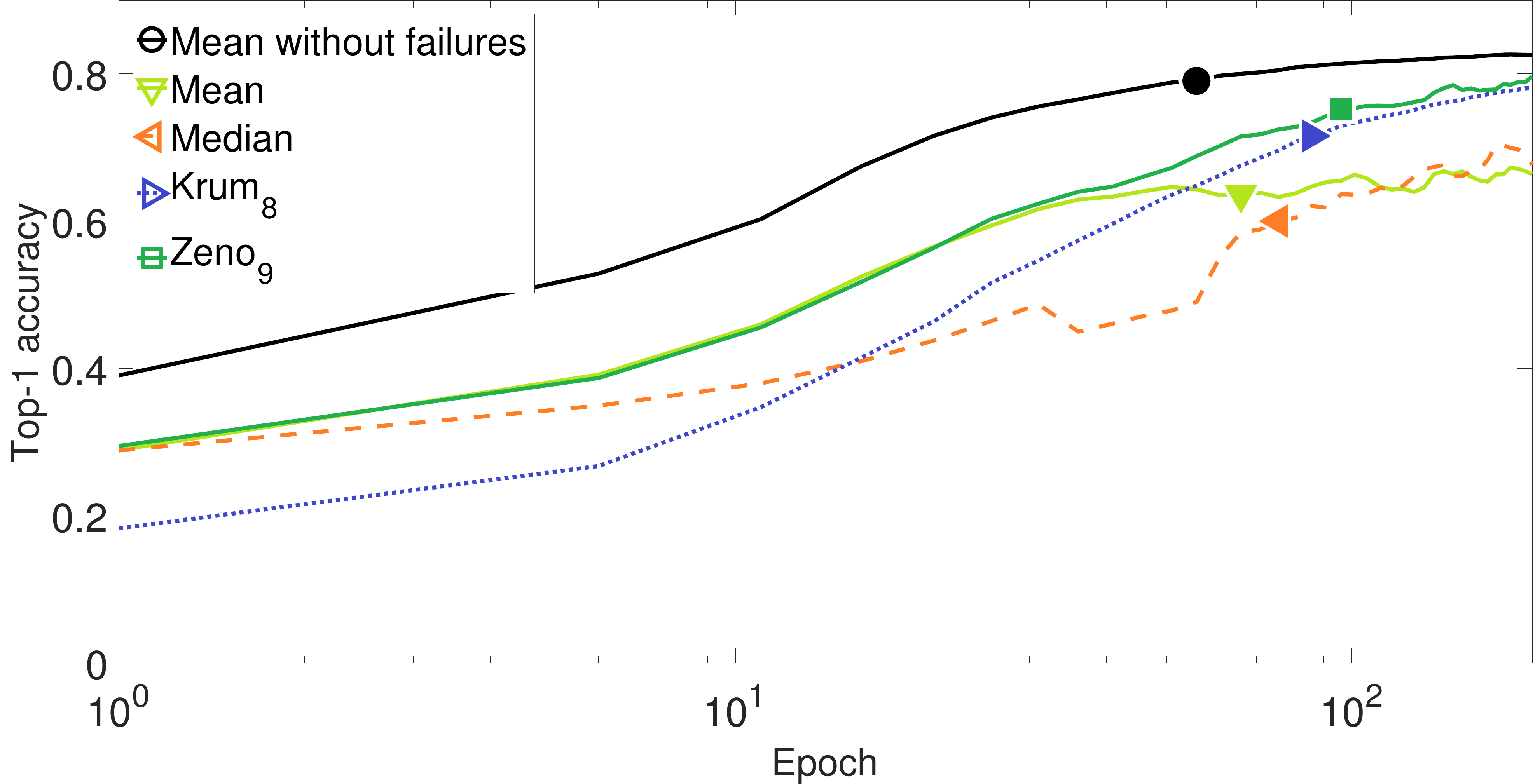}}
\subfigure[Cross entropy on training set, with $q=8$]{\includegraphics[width=0.49\textwidth,height=3.9cm]{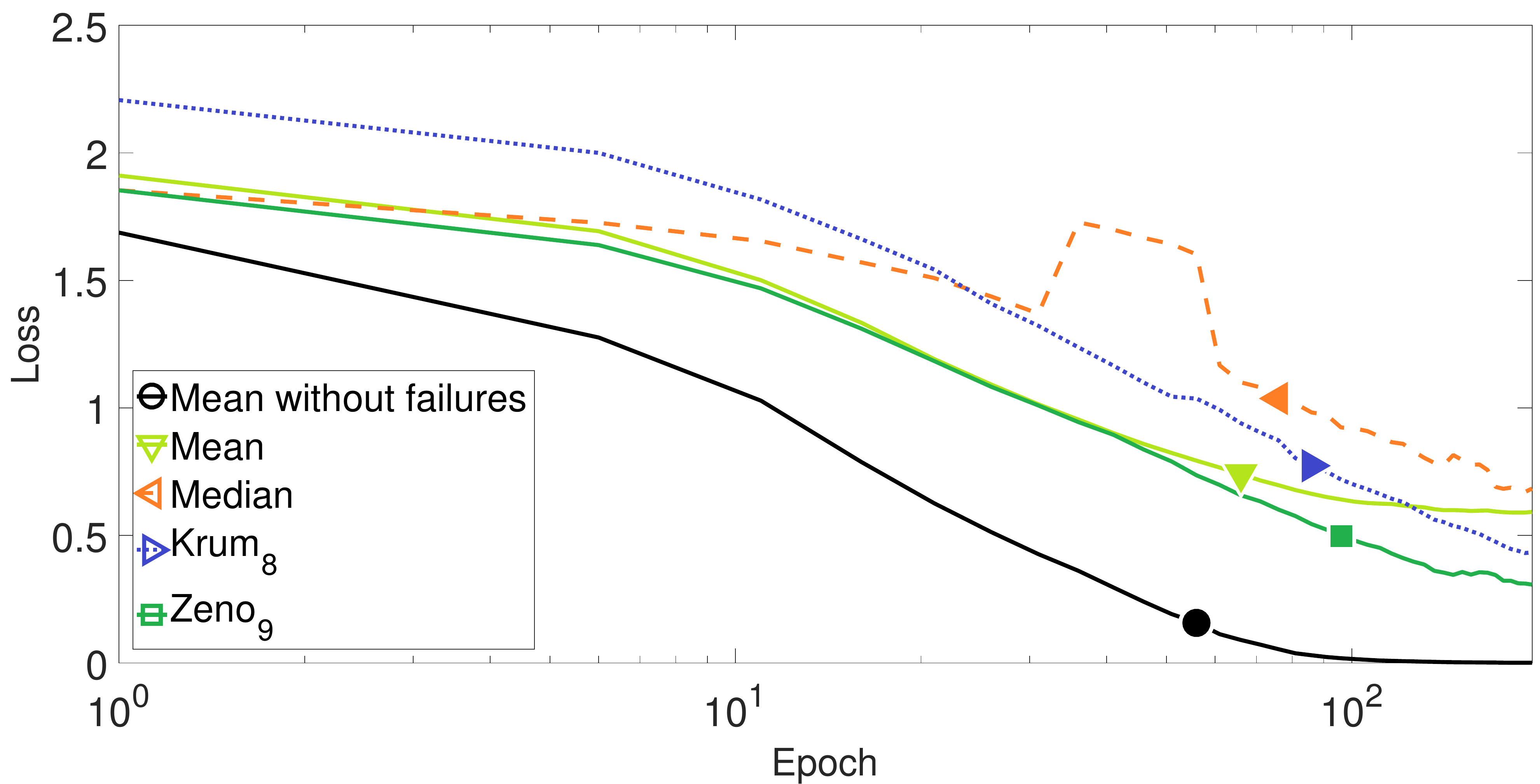}}
\subfigure[Top-1 accuracy on testing set, with $q=12$]{\includegraphics[width=0.49\textwidth,height=3.9cm]{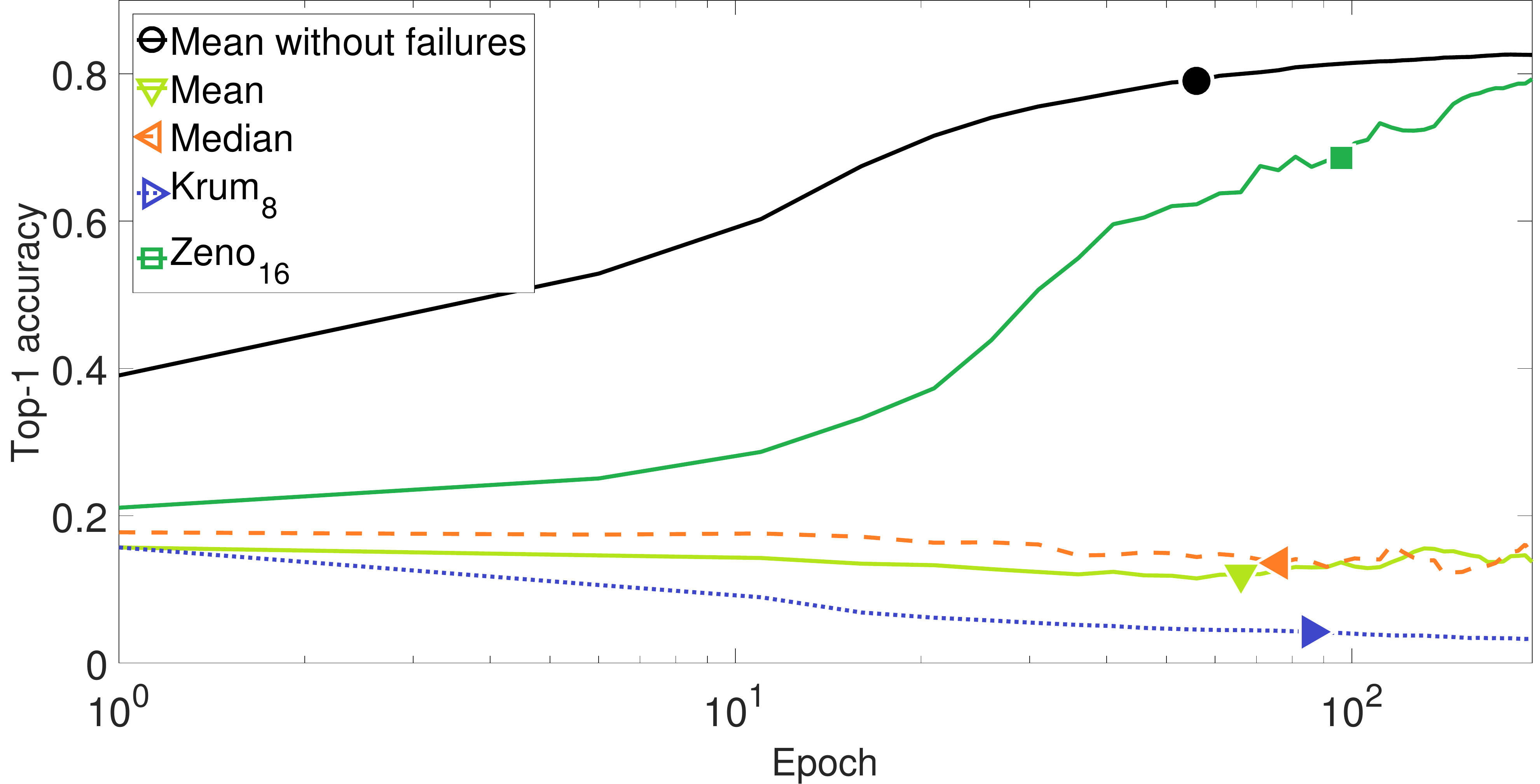}}
\subfigure[Cross entropy on training set, with $q=12$]{\includegraphics[width=0.49\textwidth,height=3.9cm]{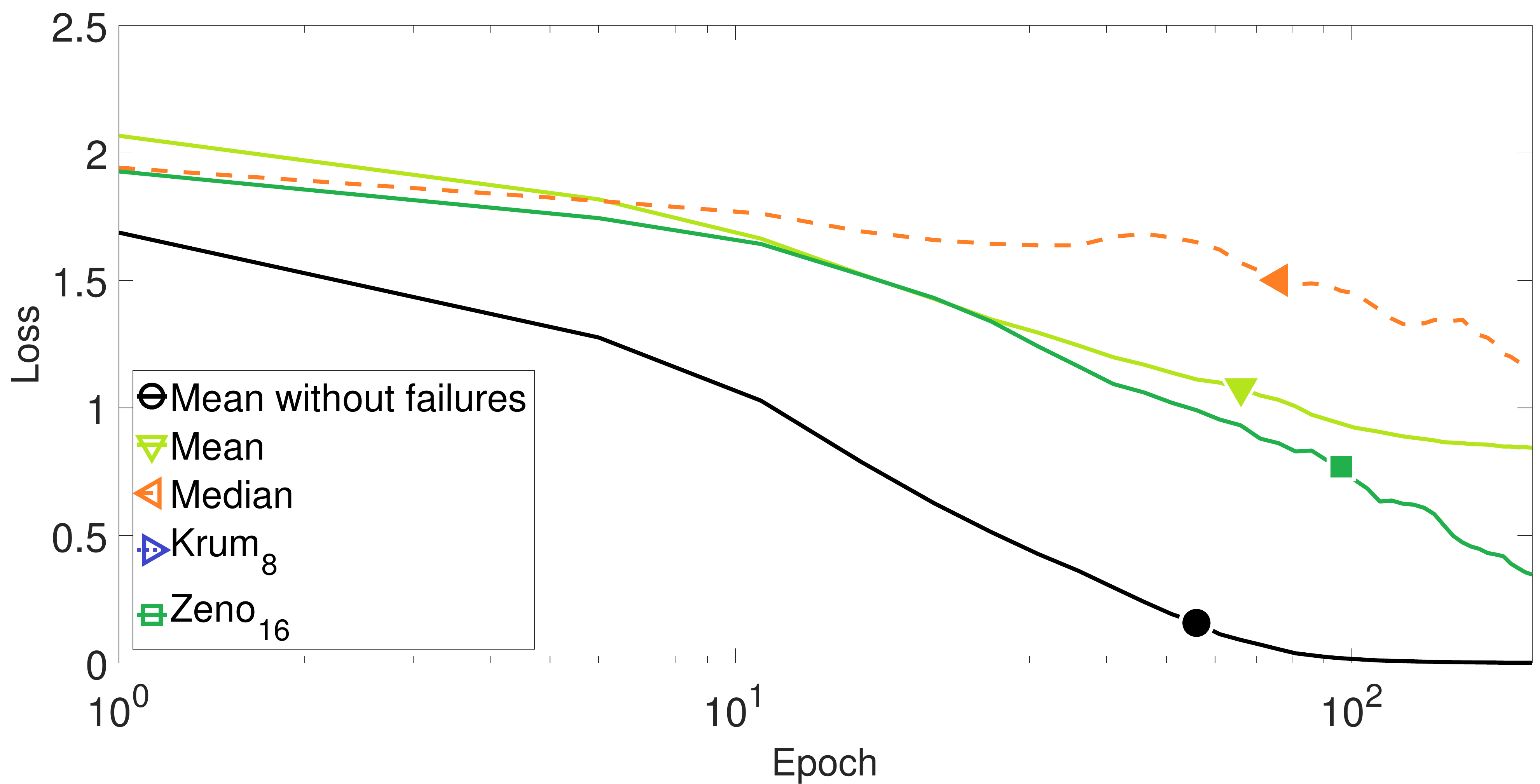}}
\caption{Convergence on i.i.d. training data, with label-flipping failures. Batch size on the workers is $100$. Batch size of \texttt{Zeno} is $n_r=4$. $\rho=0.0005$. $\gamma = 0.1$.  Each epoch has 25 iterations. \texttt{Zeno} outperforms all the baselines, especially when $q=12$.}
\label{fig:iid_labelflip}
\end{figure*}
\begin{figure*}[htb!]
\centering
\subfigure[Top-1 accuracy on testing set, with $q=8$]{\includegraphics[width=0.49\textwidth,height=3.9cm]{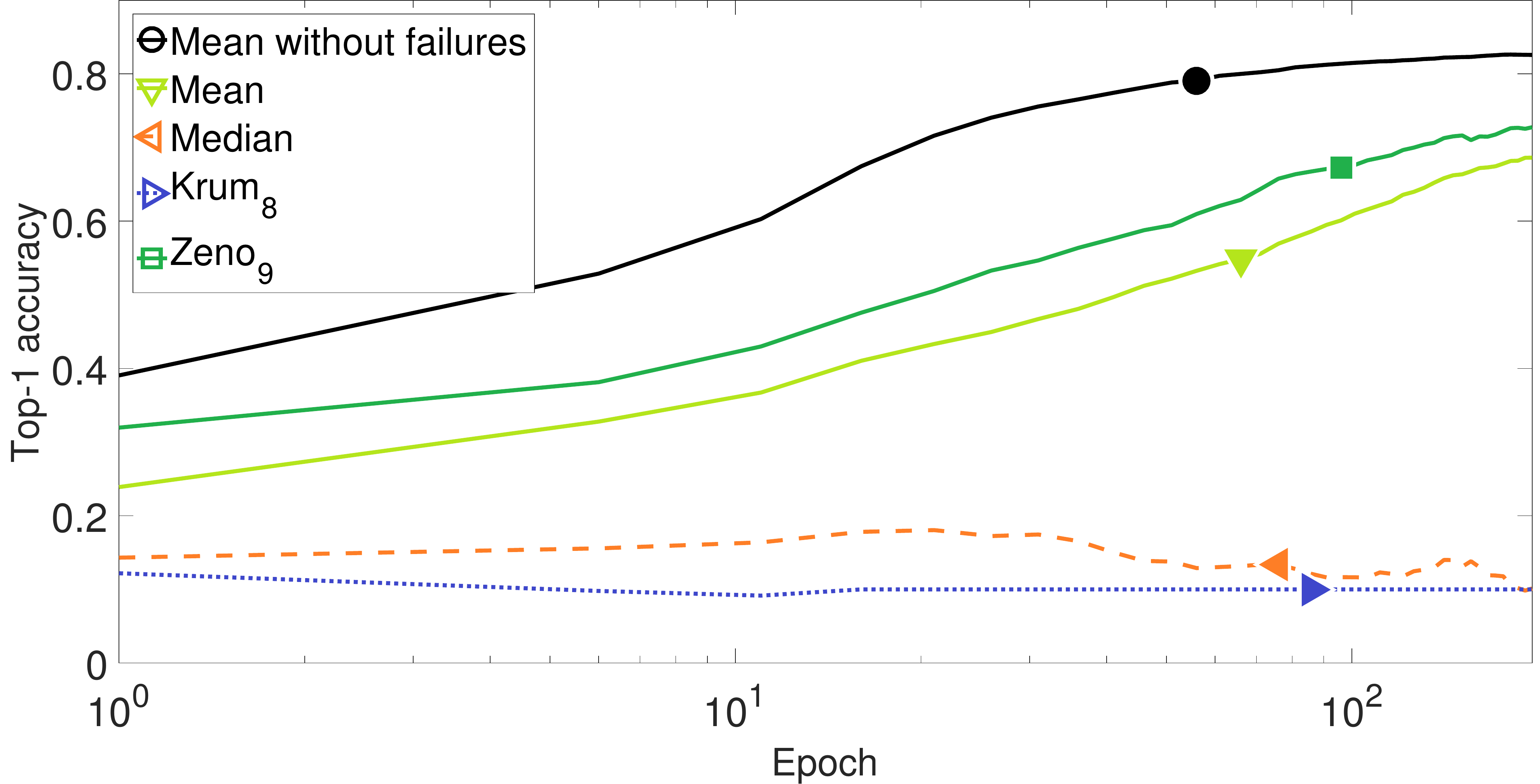}}
\subfigure[Cross entropy on training set, with $q=8$]{\includegraphics[width=0.49\textwidth,height=3.9cm]{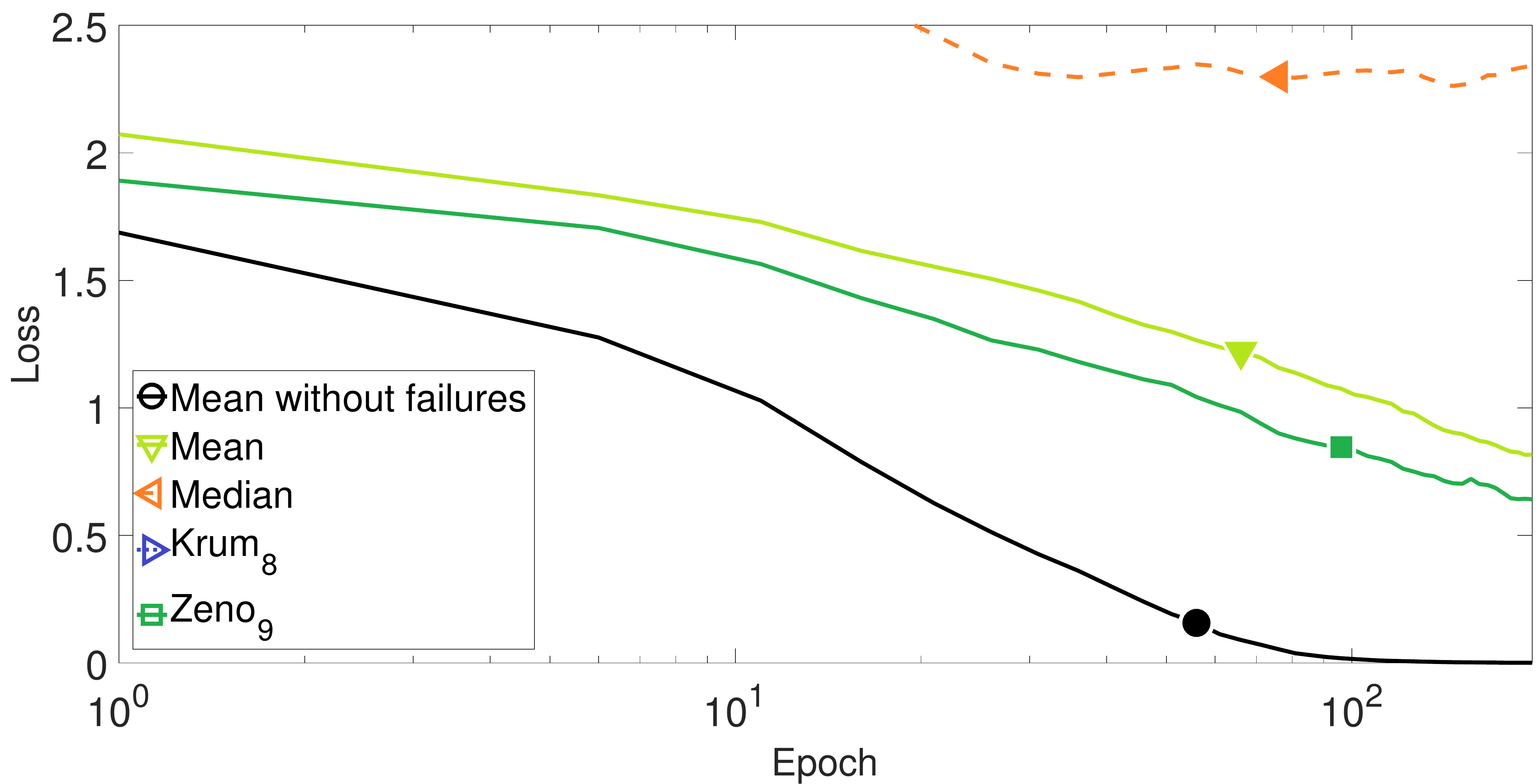}}
\subfigure[Top-1 accuracy on testing set, with $q=12$]{\includegraphics[width=0.49\textwidth,height=3.9cm]{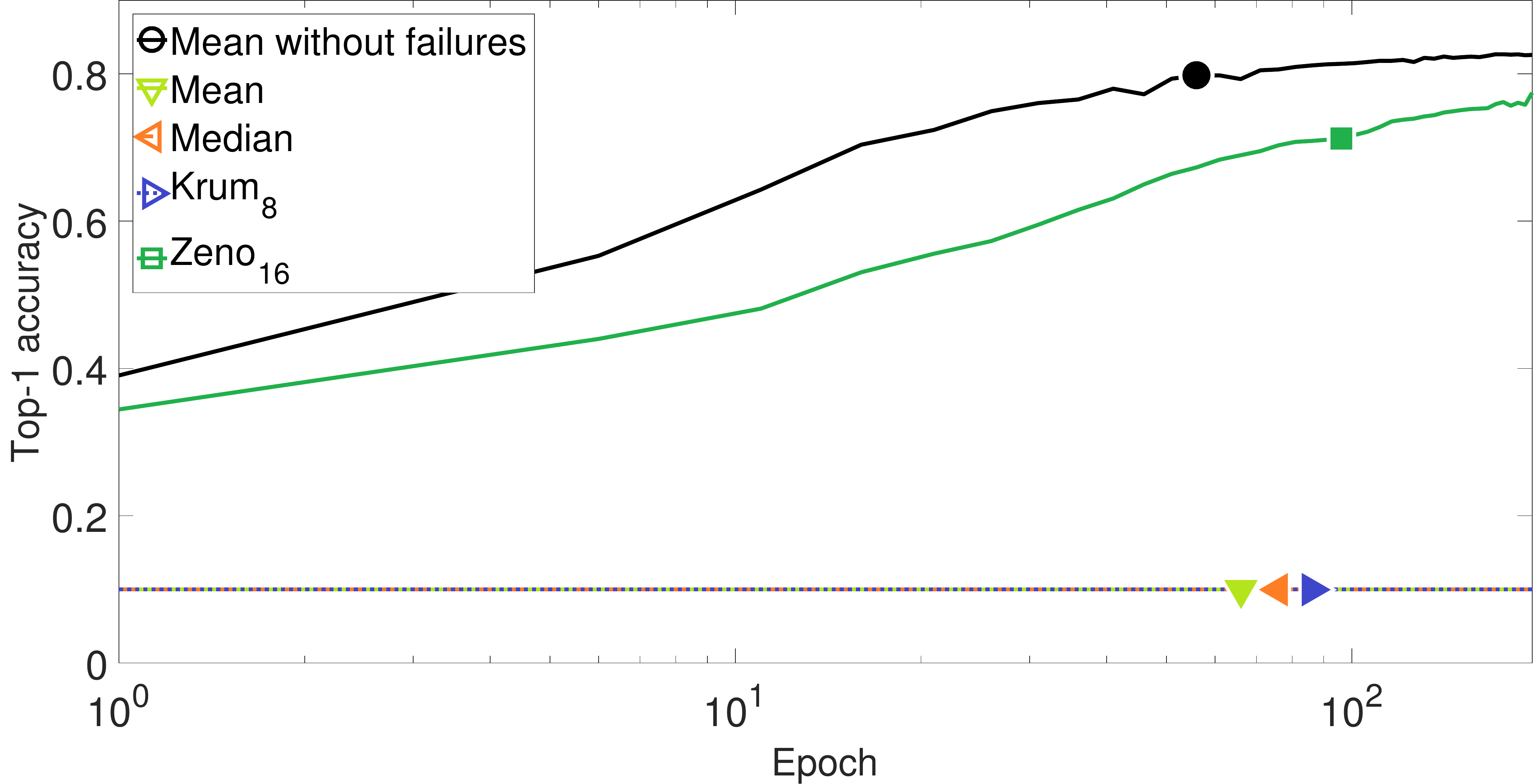}}
\subfigure[Cross entropy on training set, with $q=12$]{\includegraphics[width=0.49\textwidth,height=3.9cm]{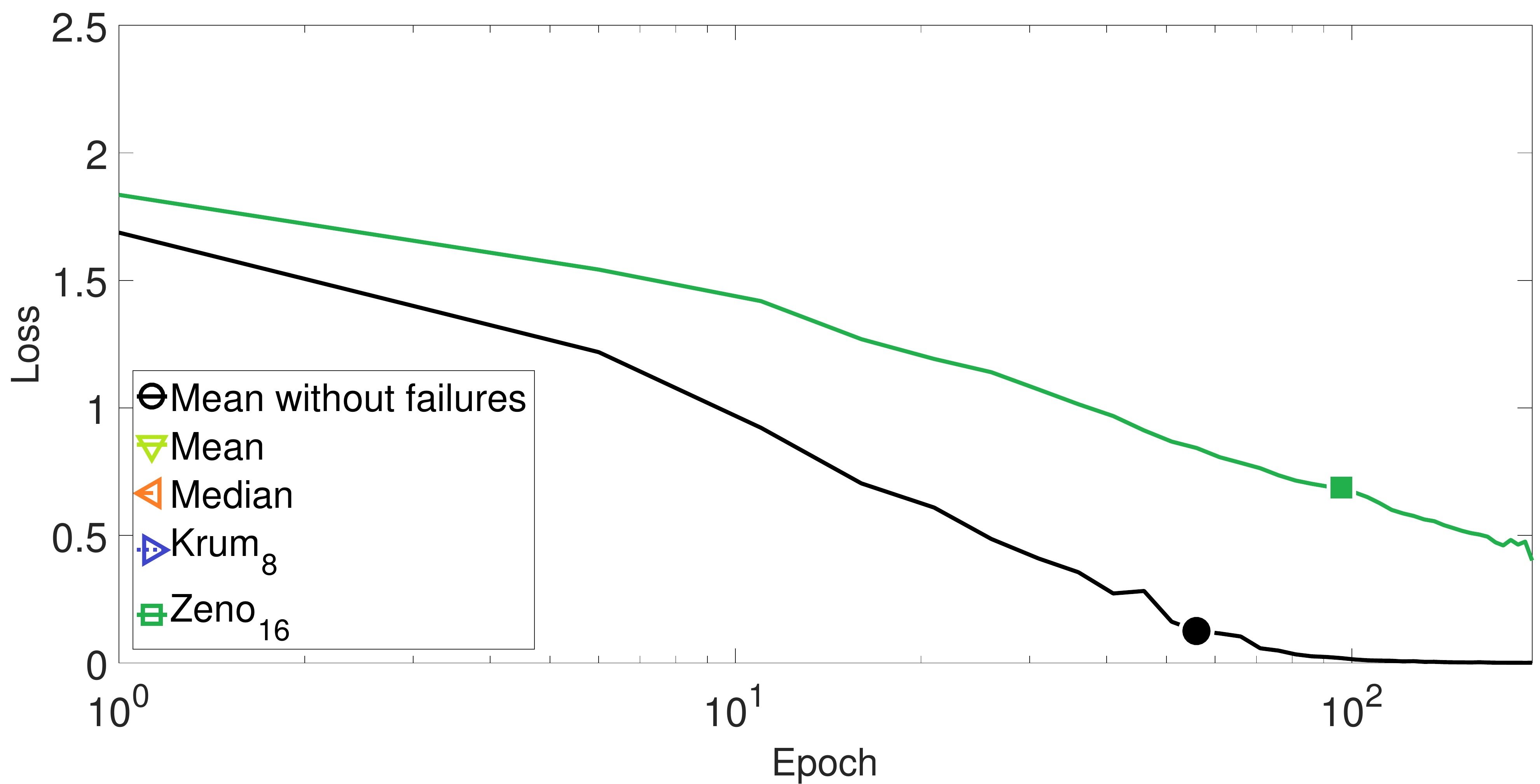}}
\caption{Convergence on i.i.d. training data, with bit-flipping failures. Batch size on the workers is $100$. Batch size of \texttt{Zeno} is $n_r=4$. $\rho=0.0005$. $\gamma = 0.1$.  Each epoch has 25 iterations. \texttt{Zeno} outperforms all the baselines, especially when $q=12$.}
\label{fig:iid_signflip}
\end{figure*}
\begin{figure*}[htb!]
\centering
\subfigure[Top-1 accuracy on testing set, with $q=8$]{\includegraphics[width=0.49\textwidth,height=3.9cm]{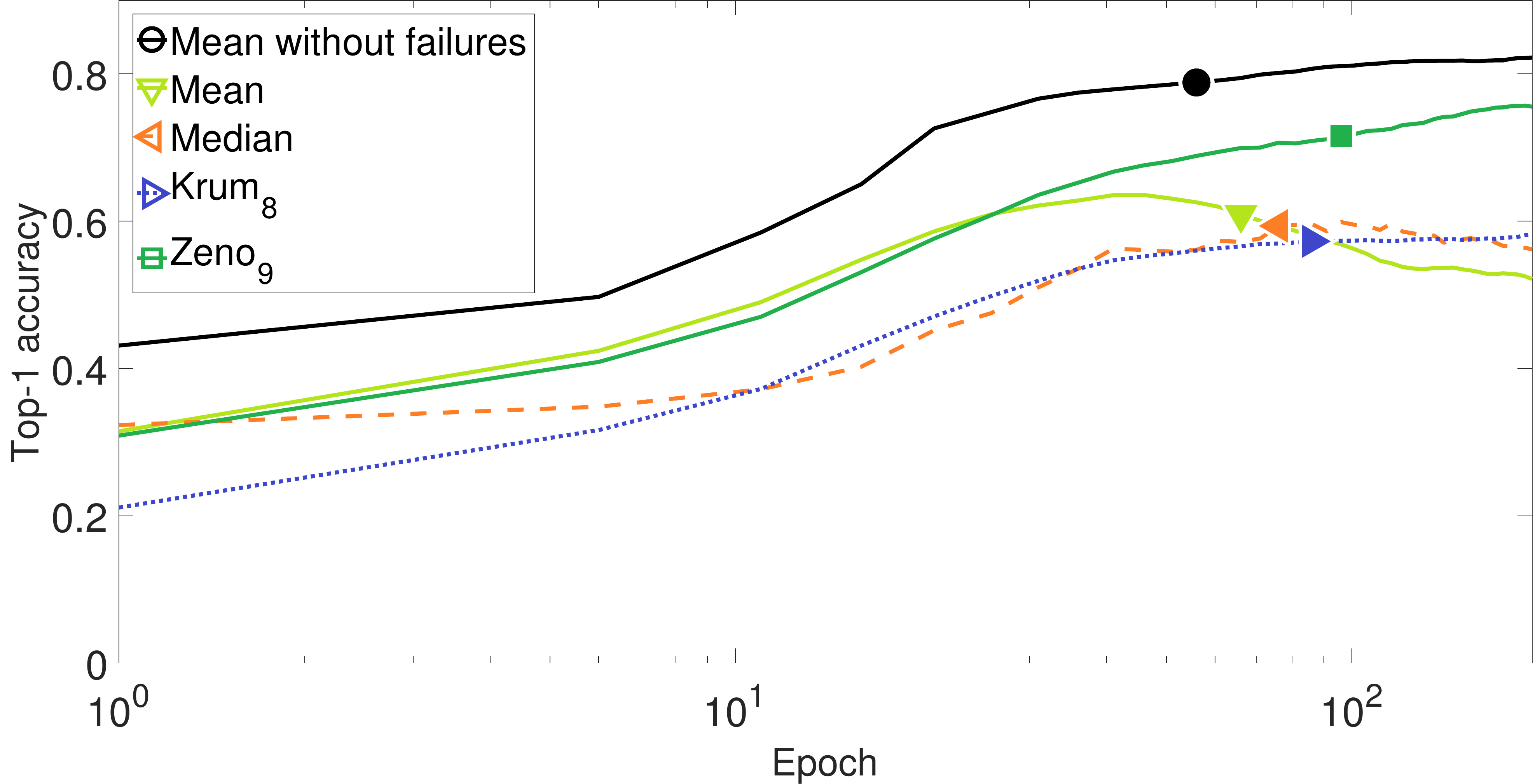}}
\subfigure[Cross entropy on training set, with $q=8$]{\includegraphics[width=0.49\textwidth,height=3.9cm]{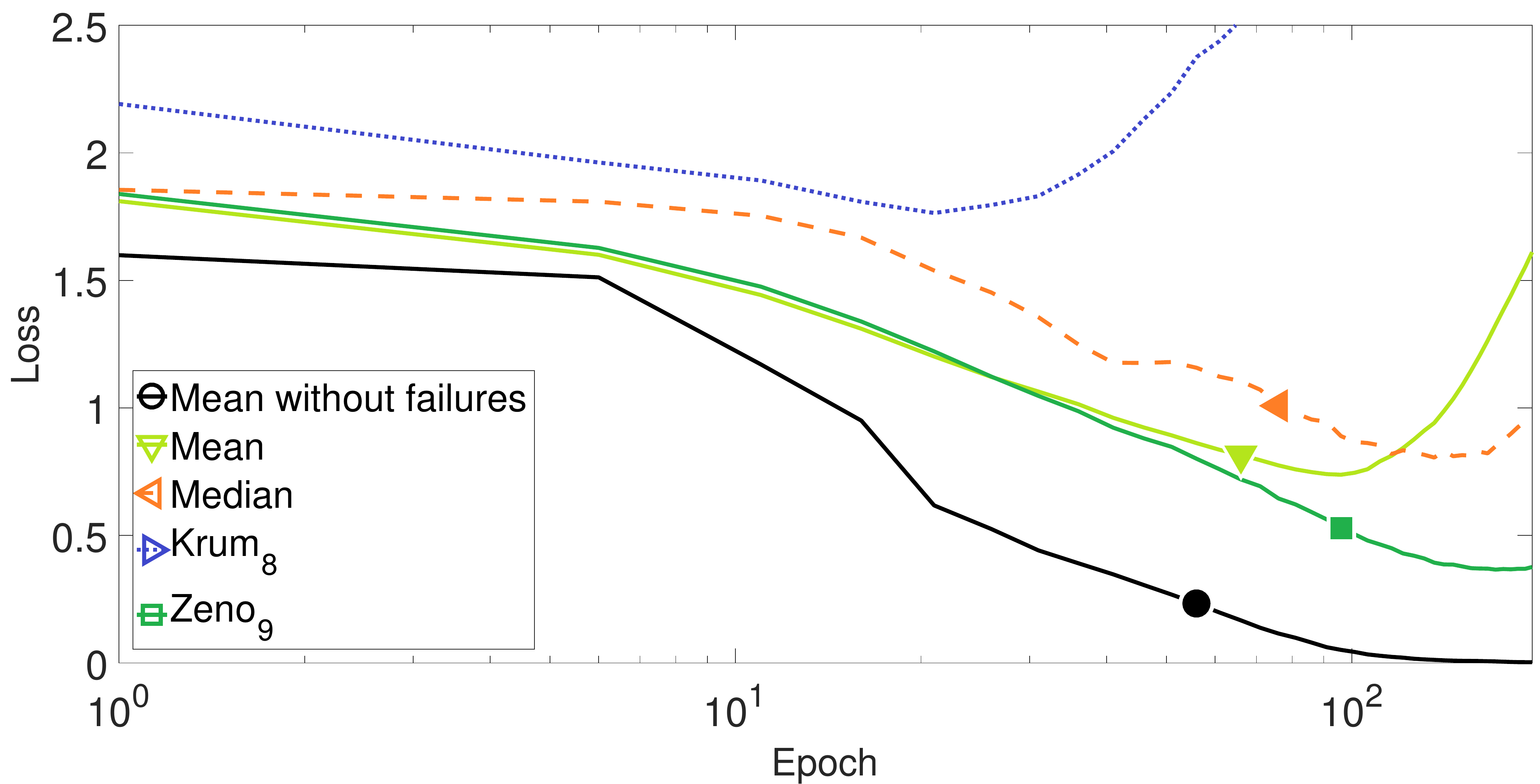}}
\subfigure[Top-1 accuracy on testing set, with $q=12$]{\includegraphics[width=0.49\textwidth,height=3.9cm]{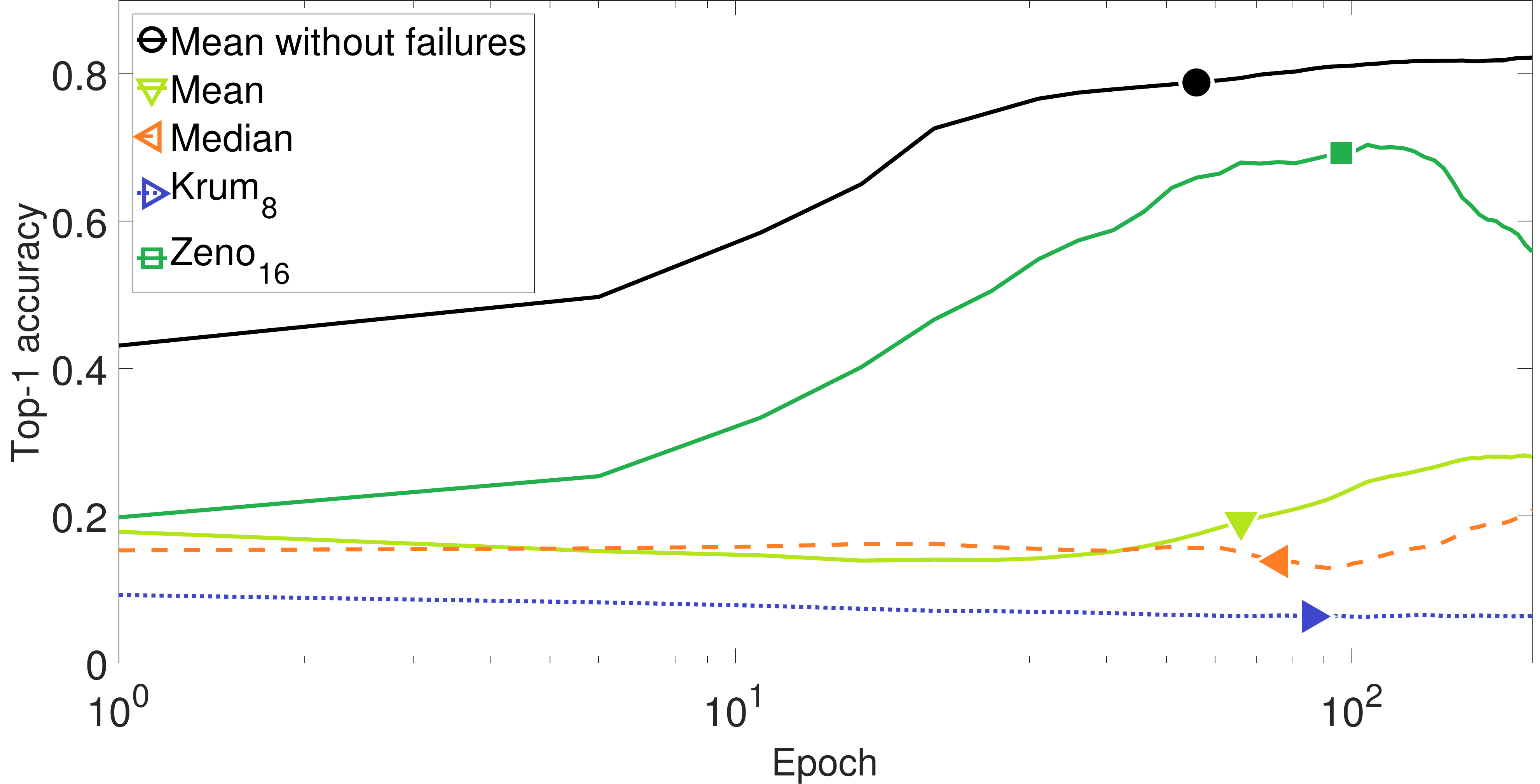}}
\subfigure[Cross entropy on training set, with $q=12$]{\includegraphics[width=0.49\textwidth,height=3.9cm]{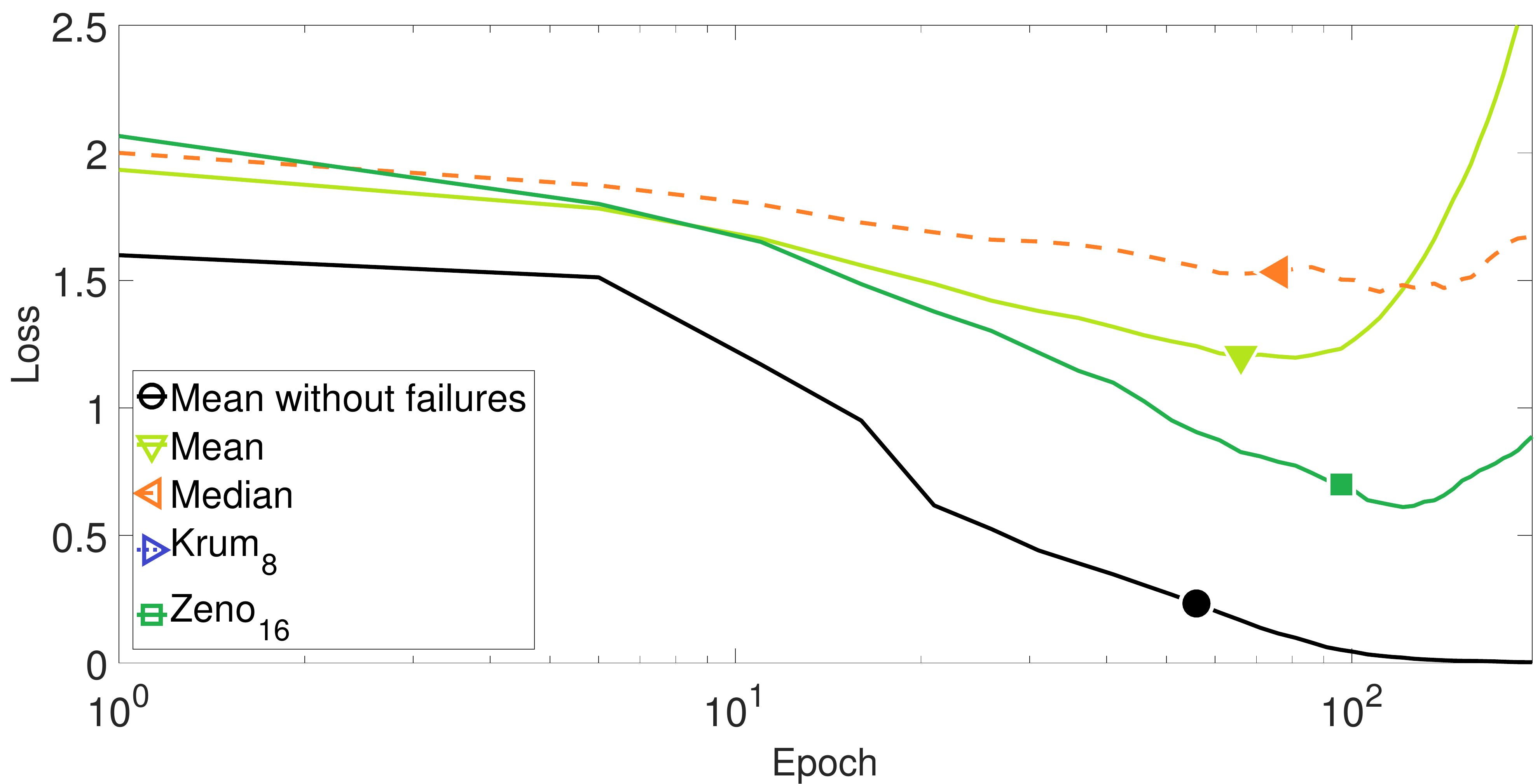}}
\caption{Convergence on disjoint~(non-i.i.d.) training data, with label-flipping failures. Batch size on the workers is $100$. Batch size of \texttt{Zeno} is $n_r=4$. $\rho=0.0005$. $\gamma = 0.05$.  Each epoch has 25 iterations. \texttt{Zeno} outperforms all the baselines, especially when $q=12$.}
\label{fig:noniid_labelflip}
\end{figure*}
\begin{figure*}[htb!]
\centering
\subfigure[Top-1 accuracy on testing set, with $q=8$]{\includegraphics[width=0.49\textwidth,height=3.6cm]{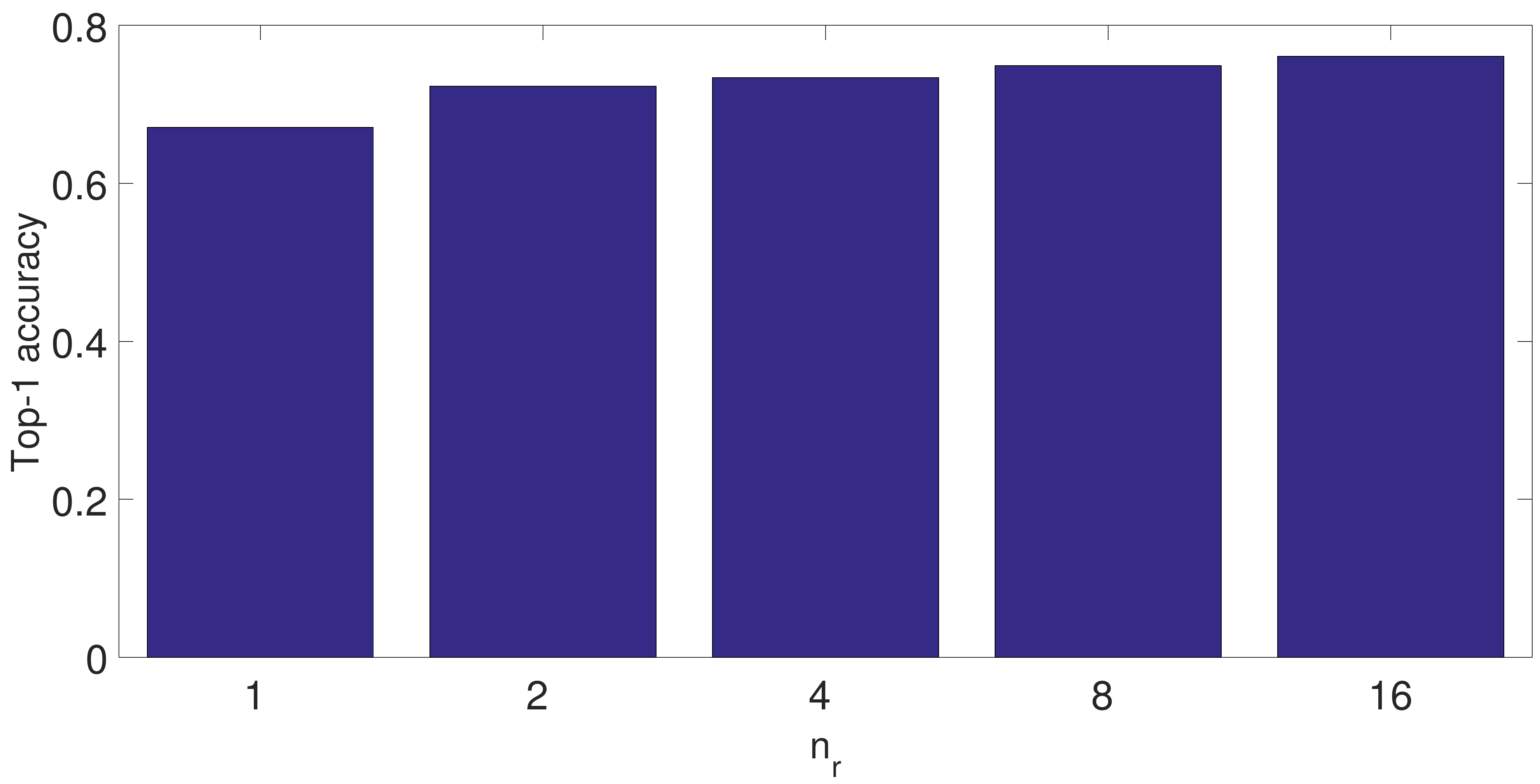}}
\subfigure[Cross entropy on training set, with $q=8$]{\includegraphics[width=0.49\textwidth,height=3.6cm]{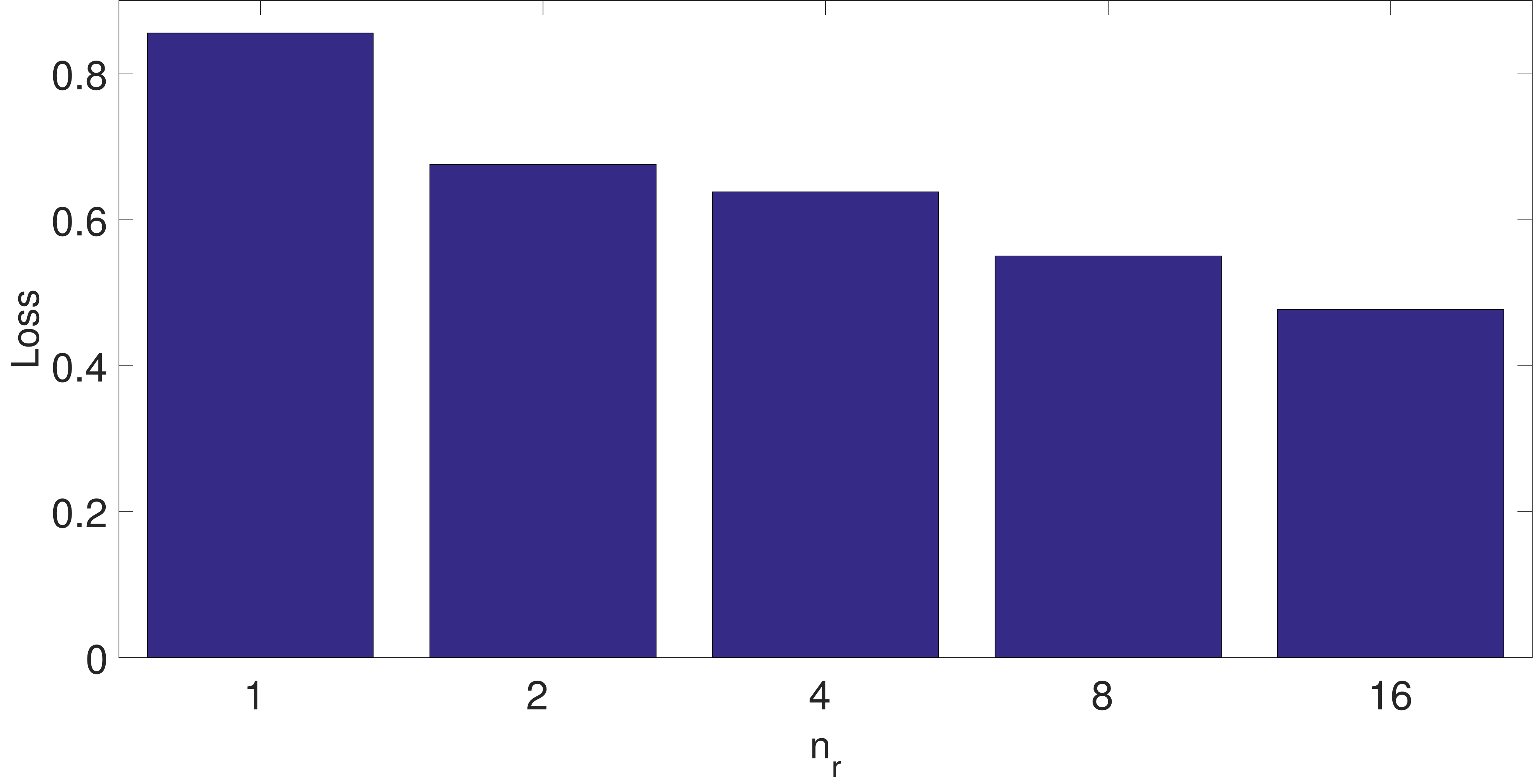}}
\caption{Convergence on i.i.d. training data, with label-flipping failures, $q=8$. Batch size on the workers is $100$. $\gamma = 0.1$.  Each epoch has 25 iterations. $n_r$ is tuned.}
\label{fig:bs}
\end{figure*}

\section{Experiments}

In this section, we evaluate the fault tolerance of the proposed algorithm. We summarize our results here:
\setitemize[0]{leftmargin=*}
\begin{itemize} 
\item Compared to the baselines, \texttt{Zeno} shows better convergence with more faulty workers than non-faulty ones.
\item \texttt{Zeno} is robust to the choices of the hyperparameters, including the Zeno batch size $n_r$, the weight $\rho$, and the number of trimmed elements $b$.
\item \texttt{Zeno} also works when training with disjoint local data.
\end{itemize}

\subsection{Datasets and Evaluation Metrics}
We conduct experiments on benchmark CIFAR-10 image classification dataset~\citep{krizhevsky2009learning}, which is composed of 50k images for training and 10k images for testing. We use convolutional neural network~(CNN) with 4 convolutional layers followed by 1 fully connected layer. The detailed network architecture can be found in \url{https://github.com/xcgoner/icml2019_zeno}.
In each experiment, we launch 20 worker processes. We repeat each experiment 10 times and take the average. We use top-1 accuracy on the testing set and the cross-entropy loss function on the training set as the evaluation metrics. 

\subsubsection{Baselines}
We use the averaging without failures/attacks as the gold standard, which is referred to as \texttt{Mean without failures}. Note that this method is not affected by $b$ or $q$. The baseline aggregation rules are \texttt{Mean}, \texttt{Median}, and \texttt{Krum} as defined below.

\begin{definition}(Median~\cite{yin2018byzantine})
\label{def:marmed}
We define the marginal median aggregation rule $\median(\cdot)$ as
$
med = \median(\{\tilde{v}_i: i \in [m]\}),
$
where for any $j\in[d]$, the $j$th dimension of $med$ is $med_j = median\left(\{(\tilde{v}_1)_j, \ldots, (\tilde{v}_m)_j\}\right)$, $(\tilde{v}_i)_j$ is the $j$th dimension of the vector $\tilde{v}_i$, $median(\cdot)$ is the one-dimensional median.  
\end{definition}

\begin{definition}(Krum~\cite{blanchard2017machine})
\label{def:krum}
\begin{align*}
&\krum_b(\{\tilde{v}_i: i \in [m]\}) = \tilde{v}_k, \quad
k = \argmin_{i \in [m]} \sum_{i \rightarrow j} \| \tilde{v}_i - \tilde{v}_j \|^2,
\end{align*}
where $i \rightarrow j$ is the indices of the $m-b-2$ nearest neighbours of $\tilde{v}_i$ in $\{\tilde{v}_i: i \in [m]\}$ measured by Euclidean distances.
\end{definition}

Note that \texttt{Krum} requires $2b+2 < m$. Thus, $b=8$ is the best we can take.

\subsection{No Failure}
We first test the convergence when there are no failures. In all the experiments, we take the learning rate $\gamma=0.1$, worker batch size $100$, \texttt{Zeno} batch size $n_r=4$, and $\rho = 0.0005$. Each worker computes the gradients on i.i.d. samples. For both \texttt{Krum} and \texttt{Zeno}, we take $b=4$. The result is shown in Figure~\ref{fig:iid_nobyz}. We can see that \texttt{Zeno} converges as fast as \texttt{Mean}. \texttt{Krum} converges slightly slower, but the convergence rate is acceptable.

\subsection{Label-flipping Failure}
In this section, we test the fault tolerance to label-flipping failures. When such  failures happen, the workers compute the gradients based on the training data with ``flipped" labels, i.e., any $label \in \{0, \ldots, 9\}$, is replaced by $9 - label$. Such failures/attacks can be caused by data poisoning or software failures. 

In all the experiments, we take the learning rate $\gamma=0.1$, worker batch size $100$, \texttt{Zeno} batch size $n_r=4$, and $\rho = 0.0005$. Each non-faulty worker computes the gradients on i.i.d. samples.

The result is shown in Figure~\ref{fig:iid_labelflip}. As expected, \texttt{Zeno} can tolerate more than half faulty gradients. When $q=8$, \texttt{Zeno} preforms similar to \texttt{Krum}. When $q=12$, \texttt{Zeno} preforms much better than the baselines. When there are faulty gradients, \texttt{Zeno} converges slower, but still has better convergence rates than the baselines. 

\subsection{Bit-flipping Failure}
In this section, we test the fault tolerance to a more severe kind of failure. Here, the bits that control the sign of the floating numbers are flipped, e.g.,  due to some hardware failure. A faulty worker pushes the negative gradient instead of the true gradient to the servers. To make the failure even worse, one of the faulty gradients is copied to and overwrites the other faulty gradients, which means that all the faulty gradients have the same value.

In all the experiments, we take the learning rate $\gamma=0.1$, worker batch size $100$, \texttt{Zeno} batch size $n_r=4$, and $\rho = 0.0005$. Each non-faulty worker computes the gradients on i.i.d. samples.

The result is shown in Figure~\ref{fig:iid_signflip}. As expected, \texttt{Zeno} can tolerate more than half faulty gradients. Surprisingly, \texttt{Mean} performs well when $q=8$. We will discuss this phenomenon in Section~\ref{subsec:discussion}. \texttt{Zeno} outperforms all the baselines. When $q=12$, \texttt{Zeno} is the only strategy which avoids catastrophic divergence. \texttt{Zeno} converges slower, but still has better convergence than the baselines. 

\subsection{Disjoint Local Training Data}
In volunteer computing~\cite{Meeds2015MLitBML,Miura2015ImplementationOA}, it is reasonable for the coordinator to assign disjoint tasks/datasets to different workers. As a result, each worker draws training samples from different datasets. The server is still aware of the entire dataset. We conduct experiments in such scenario, as discussed in Corollary~\ref{cor:disjoint}. We test \texttt{Zeno} under label-flipping failures. The results are shown in Figure~\ref{fig:noniid_labelflip}. Due to the non-i.i.d. setting, it is more difficult to distinguish faulty gradients from non-faulty ones. In such bad cases, \texttt{Zeno} can still make reasonable progress, while the baselines, especially \texttt{Krum}, performs much worse.

\subsection{Hyperparameter Sensitivity}
In Figure~\ref{fig:bs}, we show the performance of \texttt{Zeno} with different batch size $n_r$. Larger $n_r$ improves the convergence, but the gap is not significant. $n_r = 1$ still works. \texttt{Zeno} is also robust to different choices of the other hyperparameters $\rho$ and $b$. The experiments can be found in the appendix.

\subsection{Discussion}
\label{subsec:discussion}

An interesting observation is that, when $q=8$, \texttt{Mean} seems to have good performance, while it is not supposed to be fault-tolerant. The reason is that both label-flipping and bit-flipping failures do not change the magnitude of the gradients. When the number of faulty gradients $q$ is less than half, it is possible that the faulty gradients are cancelled out by the non-faulty ones. However, when the magnitude is enlarged, \texttt{Mean} will fail, as pointed out in \citet{xie2018phocas}.

In general, we find that \texttt{Zeno} is more robust than the current state of the art. When the faulty workers dominate, \texttt{Zeno} is the only aggregator that converges in all experiments. When the correct workers dominate, \texttt{Median} can be an alternative with cheap computation. 

The computational complexity of \texttt{Zeno} depends on the complexity of inference and the Zeno batch size $n_r$. These additional hyperparameters make direct comparison to standard methods more challenging. If we take the approximation that the computational complexity of inference is linear to the number of parameters, then we can roughly compare the time complexity to the baselines. Compared to \texttt{Median}, \texttt{Zeno} is computationally more expensive by the factor of $n_r = 4$. However, compared to \texttt{Krum}, which requires $20 \times 19 / 2 = 190$ times of $\mathcal{O}(d)$ operators, \texttt{Zeno} only needs $21 \times 4 = 84$ times of $\mathcal{O}(d)$ operators. Furthermore, since the batch size on the workers is $100$, the computation required on the server is less than that of one worker, which does not cancel out the computational improvements due to data parallelism. The additional computation is the cost that we have to pay for better robustness.

Another interesting observation is that, although \texttt{Krum} is the state-of-the-art algorithm, it does not perform as well as expected under our designed failures. The reason is that \texttt{Krum} requires the assumption that $c \sigma < \|g\|$ for convergence, where $c$ is a general constant, $\sigma$ is the maximal variance of the gradients, and $g$ is the gradient. Note that $\|g\| \rightarrow 0$ when SGD converges to a critical point. Thus, such assumption is never guaranteed to be satisfied, if the variance is large. Furthermore, the better SGD converges, the less likely such assumption can be satisfied~(more details of this issue can be found in \citet{xie2019fall}). 

\section{Conclusion}

We propose a novel aggregation rule for synchronous SGD, which requires a weak assumption that there is at least one honest worker. The algorithm has provable convergence. Our empirical results show good performance in practice. We will apply the proposed method to asynchronous SGD in future work.

\section*{Acknowledgements}
This work was funded in part by NSF CNS 1409416, by a gift from Microsoft, and by computational resources donated by Intel, AWS, and Microsoft Azure.

\bibliography{zeno}
\bibliographystyle{icml2019}

\appendix

\newpage
\onecolumn

\vspace*{0.1cm}
\begin{center}
	\Large\textbf{Appendix}
\end{center}
\vspace*{0.1cm}

\section{Proofs}

\subsection{Preliminaries}
We use the following lemma to bound the aggregated vectors.
\begin{lemma} (Bounded Score)
\label{lem:score}
Without loss of generality, we denote the $m-q$ correct elements in $\{\tilde{v}_i: i \in [m]\}$ as $\{v_i: i \in [m-q]\}$. Sorting the correct vectors by the stochastic descendant score, we obtain $\{v_{(i)}: i \in [m-q]\}$. Then, we have the following inequality:
\begin{align*}
Score_{\gamma, \rho}(\tilde{v}_{(i)}, x) \geq Score_{\gamma, \rho}(v_{(i)}, x), \forall i \in [m-q],
\end{align*}
or, by flipping the signs on both sides, it is equivalent to  
\begin{align*}
f_r(x - \gamma \tilde{v}_{(i)}) - f_r(x) + \rho \| \tilde{v}_{(i)} \|^2 \leq 
f_r(x - \gamma v_{(i)}) - f_r(x) + \rho \| v_{(i)} \|^2,
\forall i \in [m-q],
\end{align*}
\end{lemma}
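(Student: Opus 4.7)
The plan is to use a simple order-statistic (pigeonhole) argument that exploits the set inclusion $\{v_i : i \in [m-q]\} \subseteq \{\tilde{v}_i : i \in [m]\}$. In both sorted sequences, the entries are ordered by the same score function $\mathrm{Score}_{\gamma,\rho}(\cdot, x)$; the only difference is that the left-hand sequence is obtained by sorting a superset. So the claim reduces to the statement: the $i$-th largest score in a set is never smaller than the $i$-th largest score in any of its subsets (of size at least $i$).

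First I would fix an arbitrary $i \in [m-q]$ and consider the $i$ correct vectors $v_{(1)}, \ldots, v_{(i)}$, whose scores are all at least $\mathrm{Score}_{\gamma,\rho}(v_{(i)}, x)$ by definition of the correct-only sorting. Because every $v_{(j)}$ is one of the $\tilde{v}_k$'s, the full multiset $\{\tilde{v}_1,\ldots,\tilde{v}_m\}$ contains at least $i$ entries whose score is $\geq \mathrm{Score}_{\gamma,\rho}(v_{(i)}, x)$. By definition of $\tilde{v}_{(i)}$ as the vector with the $i$-th highest score in the full set, this forces
\begin{align*}
\mathrm{Score}_{\gamma,\rho}(\tilde{v}_{(i)}, x) \geq \mathrm{Score}_{\gamma,\rho}(v_{(i)}, x),
\end{align*}
which is exactly the first form of the inequality.

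Second, I would derive the equivalent restated form by plugging in the definition $\mathrm{Score}_{\gamma,\rho}(u,x) = f_r(x) - f_r(x-\gamma u) - \rho\|u\|^2$, cancelling the common $f_r(x)$ term on both sides, and flipping signs. No assumption on $f_r$, $F$, or on the faulty values is needed for this argument; only the ordering property of the score is used, together with the fact that $\{v_{(i)}\}$ is the sorted version of the honest subset.

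I do not expect a real obstacle here, since the result is essentially a restatement of a basic order-statistic fact. The only mild subtlety to handle is tiebreaking in the sort: if several vectors share the same score, I would simply fix any consistent tiebreaking rule (e.g., by worker index), under which the pigeonhole counting above still goes through unchanged. The key conceptual point to flag when writing this up is that, although the faulty vectors can have arbitrarily high scores and thus reshuffle positions $1,\ldots,m-q$ in the full sorted list, any such displacement of an honest $v_{(j)}$ to a later position can only be caused by another vector with even higher score sitting ahead of it — which is exactly what makes the $i$-th score in the full list dominate the $i$-th honest score.
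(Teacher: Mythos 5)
Your argument is correct and is essentially the paper's own proof: the paper establishes the same order-statistic fact by contradiction (if $Score_{\gamma,\rho}(\tilde{v}_{(i)},x) < Score_{\gamma,\rho}(v_{(i)},x)$ then the $i$ honest vectors $v_{(1)},\ldots,v_{(i)}$ would all strictly outrank $\tilde{v}_{(i)}$, contradicting its $i$-th position), while you phrase the identical counting directly. Your explicit handling of ties is a minor but welcome refinement over the paper's wording; otherwise the two proofs coincide.
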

\begin{proof}
We prove the lemma by contradiction.

Assume that $Score_{\gamma, \rho}(\tilde{v}_{(i)}, x) < Score_{\gamma, \rho}(v_{(i)}, x)$. Thus, there are $i$ correct vectors having greater scores than $\tilde{v}_{(i)}$. However, because $\tilde{v}_{(i)}$ is the $i$th element in $\{\tilde{v}_{(i)}: i \in [m]\}$, there should be at most $i-1$ vectors having greater scores than it, which yields a contradiction.
\end{proof}

\subsection{Convergence guarantees}
For general non-strongly convex functions and non-convex functions, we provide the following convergence guarantees.

\setcounter{theorem}{0}
\begin{theorem}
For $\forall x \in \R^d$, denote
\begin{align*}
\tilde{v}_i = 
\begin{cases}
* & \mbox{$i$th worker is Byzantine}, \\
\nabla F_i(x) & \mbox{otherwise,}
\end{cases}
\end{align*}
where $i \in [m]$, and $\bar{\tilde{v}} = \zeno_b(\{\tilde{v}_i: i \in [m]\})$.
Taking $\gamma \leq \frac{1}{L}$, and $\rho = \frac{\beta \gamma^2}{2}$, where 
\begin{align*}
\begin{cases}
\beta = 0, &\mbox{\quad if $\mu \geq 0$;}\\
\beta \geq |\mu|, &\mbox{\quad otherwise.}
\end{cases}
\end{align*} we have 
\begin{align*}
\E \left[ F(x - \gamma \bar{\tilde{v}}) \right] - F(x) 
\leq -\frac{\gamma}{2} \| \nabla F(x) \|^2 + \frac{\gamma(b-q + 1)(m-q)V}{(m-b)^2} + \frac{(L+\beta)\gamma^2 G}{2}.
\end{align*}
\end{theorem}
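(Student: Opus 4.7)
My plan is to combine the $L$-smoothness descent lemma for $F$ with the score-based comparison of Lemma~\ref{lem:score}, thereby reducing the claim to bounding the mean-squared error of the Zeno aggregate relative to $\nabla F(x)$. I would proceed in three stages.

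\emph{Stage~1 (smoothness reduction).} I would apply the $L$-smoothness of $f(x;z)$ from Assumption~\ref{asm:smooth_cvx} (which $F$ inherits) to get $F(x-\gamma\bar{\tilde{v}}) - F(x) \leq -\gamma\langle\nabla F(x), \bar{\tilde{v}}\rangle + \frac{L\gamma^2}{2}\|\bar{\tilde{v}}\|^2$. Using the polarization identity $-\gamma\langle a,b\rangle = \frac{\gamma}{2}(\|a-b\|^2 - \|a\|^2 - \|b\|^2)$ with $a=\nabla F(x)$, $b=\bar{\tilde{v}}$, together with $\gamma L \leq 1$ (so $\frac{L\gamma^2}{2} - \frac{\gamma}{2} \leq 0$), this absorbs $\frac{L\gamma^2}{2}\|\bar{\tilde{v}}\|^2$ into $\frac{\gamma}{2}\|\bar{\tilde{v}}\|^2$ and reduces the descent inequality to $F(x-\gamma\bar{\tilde{v}}) - F(x) \leq -\frac{\gamma}{2}\|\nabla F(x)\|^2 + \frac{\gamma}{2}\|\bar{\tilde{v}} - \nabla F(x)\|^2$, producing the leading $-\frac{\gamma}{2}\|\nabla F(x)\|^2$ term immediately and leaving $\E\|\bar{\tilde{v}} - \nabla F(x)\|^2$ to bound.

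\emph{Stage~2 (score-based comparison).} I would introduce the auxiliary aggregate $\bar{v}_{\mathrm{top}} := \frac{1}{m-b}\sum_{i=1}^{m-b} v_{(i)}$, the mean of the top-$(m-b)$ correct gradients ranked by score. Expanding $Score_{\gamma,\rho}(\tilde{v}_{(i)},x) \geq Score_{\gamma,\rho}(v_{(i)},x)$ from Lemma~\ref{lem:score}, applying the smoothness upper bound of Assumption~\ref{asm:smooth_cvx} to $f_r(x-\gamma v_{(i)})$ and the $\mu$-weak-convexity lower bound to $f_r(x-\gamma\tilde{v}_{(i)})$, and substituting $\rho = \beta\gamma^2/2$, yields the per-index inequality $\gamma\langle\nabla f_r(x), v_{(i)} - \tilde{v}_{(i)}\rangle \leq \frac{(L+\beta)\gamma^2}{2}\|v_{(i)}\|^2 - \frac{(\mu+\beta)\gamma^2}{2}\|\tilde{v}_{(i)}\|^2$ for $i = 1,\dots,m-b$. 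Averaging in $i$ gives a \emph{pointwise} comparison linking $\bar{\tilde{v}}$ to $\bar{v}_{\mathrm{top}}$, and since $\beta+\mu > 0$ the negative $\|\tilde{v}_{(i)}\|^2$ term can be discarded when convenient.

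\emph{Stage~3 (take expectations).} I would split $\E\|\bar{\tilde{v}} - \nabla F(x)\|^2$ into a variance piece $\E\|\bar{v}_{\mathrm{top}} - \nabla F(x)\|^2$ and a comparison piece driven by stage~2. For the variance piece, use the identity $\sum_{i=1}^{m-b}v_{(i)} = \sum_{i=1}^{m-q}v_i - \sum_{i=m-b+1}^{m-q}v_{(i)}$, $(a+b)^2 \leq 2a^2+2b^2$, independence of the correct $v_i$'s, Cauchy--Schwarz on the $b-q$ dropped terms (bounded pointwise by $\sum_{i=1}^{m-q}\|v_i - \nabla F(x)\|^2$), and $\E\|v_i - \nabla F(x)\|^2 \leq V$ from Assumption~\ref{asm:variance}; this yields the sharp bound $\E\|\bar{v}_{\mathrm{top}} - \nabla F(x)\|^2 \leq \frac{2(b-q+1)(m-q)V}{(m-b)^2}$, which upon multiplication by $\gamma/2$ delivers the first error term. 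For the comparison piece, the stage~2 inequality combined with $\E\|v_i\|^2 \leq G$ to control $\frac{1}{m-b}\sum_i\|v_{(i)}\|^2$ and $\E[\nabla f_r(x)] = \nabla F(x)$ (since $f_r$ is drawn independently of the candidate gradients) produces the $\frac{(L+\beta)\gamma^2 G}{2}$ term.

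The principal obstacle is measurability: the Zeno sort uses $f_r$, so $\bar{\tilde{v}}$ and $\bar{v}_{\mathrm{top}}$ are both functions of $f_r$, and $\E\langle \nabla f_r(x), \bar{v}_{\mathrm{top}} - \bar{\tilde{v}}\rangle$ does not factor via independence. The remedy is to keep the stage~2 comparison pointwise (valid for every realization of $f_r$ and of the $\tilde{v}_i$'s) and to rely on pointwise envelopes such as $\sum_{i=1}^{m-b}\|v_{(i)}\|^2 \leq \sum_{i=1}^{m-q}\|v_i\|^2$ and the decompositions above, which carry through the outer expectation without requiring any independence between the score-induced ordering and the correct gradients.
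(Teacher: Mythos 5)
Your Stage~1 is where the argument breaks. By applying the descent lemma to $F$ at $x-\gamma\bar{\tilde{v}}$ and completing the square, you reduce the theorem to bounding $\E\|\bar{\tilde{v}}-\nabla F(x)\|^2$ --- a squared deviation of an aggregate that still contains up to $m-b-(m-q)=q-b+\dots$, in general several, \emph{faulty} vectors. No assumption bounds those vectors, and the only handle you have on them is the Stage~2 score inequality
$\gamma\ip{\nabla f_r(x)}{v_{(i)}-\tilde{v}_{(i)}}\le \frac{(L+\beta)\gamma^2}{2}\|v_{(i)}\|^2-\frac{(\mu+\beta)\gamma^2}{2}\|\tilde{v}_{(i)}\|^2$,
which controls only the one-dimensional projection of $\tilde{v}_{(i)}$ onto $\nabla f_r(x)$ plus a quadratic penalty whose coefficient $\mu+\beta$ is exactly $0$ in the admissible cases $\beta=0,\mu=0$ and $\beta=|\mu|,\mu<0$. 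A selected faulty vector orthogonal to $\nabla f_r(x)$ with arbitrarily large norm then satisfies the score constraint while making $\|\bar{\tilde{v}}-\nabla F(x)\|^2$ arbitrarily large, so neither $\|\bar{\tilde{v}}-\bar{v}_{\mathrm{top}}\|^2$ nor the cross term in your Stage~3 split is controlled. Even when $\mu+\beta>0$ strictly, solving the score inequality for $\|\tilde{v}_{(i)}\|^2$ gives a bound of order $\frac{L+\beta}{\mu+\beta}\|v_{(i)}\|^2$ plus inner-product terms divided by $\gamma^2$; fed into $\frac{\gamma}{2}\|\bar{\tilde{v}}-\nabla F(x)\|^2$ this yields an $O(\gamma)$ residual, not the $O(\gamma^2)$ term $\frac{(L+\beta)\gamma^2 G}{2}$ the theorem asserts. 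Your closing remark about keeping comparisons ``pointwise'' addresses the measurability of the ordering but not this quantitative gap.

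The paper's proof avoids the issue by never measuring the deviation of $\bar{\tilde{v}}$ at all: it works at the level of function values of $f_r$. The weak-convexity lower bound is applied around the point $x-\gamma\bar{\tilde{v}}$ (not around $x$), so that summing over $i$ cancels the inner products and yields a Jensen-type inequality $f_r(x-\gamma\bar{\tilde{v}})\le\frac{1}{m-b}\sum_i f_r(x-\gamma\tilde{v}_{(i)})-\frac{\mu\gamma^2}{2(m-b)}\sum_i\|\bar{\tilde{v}}-\tilde{v}_{(i)}\|^2$; Lemma~\ref{lem:score} then swaps $f_r(x-\gamma\tilde{v}_{(i)})$ for $f_r(x-\gamma v_{(i)})$ at the cost of the $\rho$-terms, and two applications of $L$-smoothness collapse the average to $f_r(x-\gamma\bar{v})$ and then to $f_r(x)+\ip{\nabla f_r(x)}{-\gamma\bar{v}}+\frac{\gamma}{2}\|\bar{v}\|^2$. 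The only squared deviation that ever needs bounding is $\E\|\nabla F(x)-\bar{v}\|^2$ for the purely correct aggregate $\bar{v}$, which is exactly what your (correct) subset-variance computation handles. To repair your proof you would need to restructure Stages~1--2 along these lines rather than patch Stage~3.
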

\begin{proof}
Without loss of generality, we denote the $m-q$ correct elements in $\{\tilde{v}_i: i \in [m]\}$ as $\{v_i: i \in [m-q]\}$, where $\E[v_i] = \nabla F(x)$. Sorting the correct vectors by the online descendant score, we obtain $\{v_{(i)}: i \in [m-q]\}$. We also sort $\tilde{v}_i$ by the online descendant score and obtain $\{\tilde{v}_{(i)}: i \in [m]\}$.

According to the definition, $\bar{\tilde{v}} = \zeno_b(\{\tilde{v}_i: i \in [m]\}) =  \frac{1}{m-b} \sum_{i=1}^{m-b} \tilde{v}_{(i)}$. Furthermore, we denote $\bar{v} = \frac{1}{m-b} \sum_{i=1}^{m-b} v_{(i)}$.

Using Assumption~\ref{asm:smooth_cvx}, we have
\begin{align*}
& f_r(x - \gamma \tilde{v}_{(i)}) \geq f_r(x - \gamma \bar{\tilde{v}}) + \ip{\nabla f_r(x - \gamma \bar{\tilde{v}})}{ \gamma (\bar{\tilde{v}} - \tilde{v}_{(i)})} + \frac{\mu \gamma^2}{2} \| \bar{\tilde{v}} - \tilde{v}_{(i)} \|^2,
\end{align*}
for $\forall i \in [m-b]$.

By summing up, we have 
\begin{align}
& \frac{1}{m-b} \sum_{i=1}^{m-b} f_r(x - \gamma \tilde{v}_{(i)}) \geq f_r(x - \gamma \bar{\tilde{v}}) + \frac{\mu \gamma^2}{2(m-b)}  \sum_{i=1}^{m-b} \| \bar{\tilde{v}} - \tilde{v}_{(i)} \|^2. \label{equ:str_cvx}
\end{align}

Using Lemma~\ref{lem:score}, we have 
\begin{align*}
f_r(x - \gamma \tilde{v}_{(i)}) + \rho \|\tilde{v}_{(i)}\|^2 \leq f_r(x - \gamma v_{(i)}) + \rho \|v_{(i)}\|^2,
\end{align*}
for $\forall i \in [m-b]$.

Combined with Equation~\ref{equ:str_cvx}, we have 
\begin{align*}
&f_r(x - \gamma \bar{\tilde{v}}) \\
&\leq \frac{1}{m-b} \sum_{i=1}^{m-b} f_r(x - \gamma \tilde{v}_{(i)}) - \frac{\mu \gamma^2}{2(m-b)}  \sum_{i=1}^{m-b} \| \bar{\tilde{v}} - \tilde{v}_{(i)} \|^2 \\
&\leq \frac{1}{m-b} \sum_{i=1}^{m-b} f_r(x - \gamma v_{(i)}) + \frac{\rho}{m-b} \sum_{i=1}^{m-b} \left[ \|v_{(i)}\|^2 - \|\tilde{v}_{(i)}\|^2 \right] - \frac{\mu \gamma^2}{2(m-b)}  \sum_{i=1}^{m-b} \| \bar{\tilde{v}} - \tilde{v}_{(i)} \|^2.
\end{align*}

We take $\rho = \frac{\beta \gamma^2}{2}$, where 
\begin{align*}
\begin{cases}
\beta = 0, &\mbox{\quad if $\mu \geq 0$;}\\
\beta \geq |\mu|, &\mbox{\quad otherwise.}
\end{cases}
\end{align*}
Thus, if $\mu \geq 0$, we have $\rho = 0$, which implies that
\begin{align*}
\frac{\rho}{m-b} \sum_{i=1}^{m-b} \left[ \|v_{(i)}\|^2 - \|\tilde{v}_{(i)}\|^2 \right] - \frac{\mu \gamma^2}{2(m-b)}  \sum_{i=1}^{m-b} \| \bar{\tilde{v}} - \tilde{v}_{(i)} \|^2 \leq \frac{\beta \gamma^2}{2(m-b)} \sum_{i=1}^{m-b} \|v_{(i)}\|^2.
\end{align*}
Also, if $\mu < 0$, since $\beta \geq -\mu$, we have 
\begin{align*}
&\frac{\rho}{m-b} \sum_{i=1}^{m-b} \left[ \|v_{(i)}\|^2 - \|\tilde{v}_{(i)}\|^2 \right] - \frac{\mu \gamma^2}{2(m-b)}  \sum_{i=1}^{m-b} \| \bar{\tilde{v}} - \tilde{v}_{(i)} \|^2 \\
&= \frac{\beta \gamma^2}{2(m-b)} \sum_{i=1}^{m-b} \left[ \|v_{(i)}\|^2 - \|\tilde{v}_{(i)}\|^2 \right] - \frac{\mu \gamma^2}{2(m-b)}  \sum_{i=1}^{m-b} \left[ \|\tilde{v}_{(i)}\|^2 - \|\bar{\tilde{v}}\|^2 \right] \\
&= \frac{\beta \gamma^2}{2(m-b)} \sum_{i=1}^{m-b} \|v_{(i)}\|^2 + \frac{(-\beta-\mu) \gamma^2}{2(m-b)} \sum_{i=1}^{m-b} \|\tilde{v}_{(i)}\|^2  + \frac{\mu \gamma^2}{2(m-b)}  \sum_{i=1}^{m-b} \|\bar{\tilde{v}}\|^2 \\
&\leq \frac{\beta \gamma^2}{2(m-b)} \sum_{i=1}^{m-b} \|v_{(i)}\|^2.
\end{align*}

Thus, we have
\begin{align*}
&f_r(x - \gamma \bar{\tilde{v}}) \\
&\leq \frac{1}{m-b} \sum_{i=1}^{m-b} f_r(x - \gamma v_{(i)}) + \frac{\rho}{m-b} \sum_{i=1}^{m-b} \left[ \|v_{(i)}\|^2 - \|\tilde{v}_{(i)}\|^2 \right] - \frac{\mu \gamma^2}{2(m-b)}  \sum_{i=1}^{m-b} \| \bar{\tilde{v}} - \tilde{v}_{(i)} \|^2 \\
&\leq \frac{1}{m-b} \sum_{i=1}^{m-b} f_r(x - \gamma v_{(i)}) + \frac{\beta \gamma^2}{2(m-b)} \sum_{i=1}^{m-b} \|v_{(i)}\|^2.
\end{align*}

Using the $L$-smoothness, we have 
\begin{align*}
f_r(x - \gamma v_{(i)}) \leq f_r(x - \gamma \bar{v}) + \ip{\nabla f_r(x - \gamma \bar{v})}{\gamma ( \bar{v} - v_{(i)} )} + \frac{L\gamma^2}{2} \| \bar{v} - v_{(i)} \|^2,
\end{align*}
for $\forall i \in [m-b]$.
By summing up, we have 
\begin{align*}
&\frac{1}{m-b} \sum_{i=1}^{m-b} f_r(x - \gamma v_{(i)}) \\
&\leq f_r(x - \gamma \bar{v}) + \frac{L\gamma^2}{2(m-b)} \sum_{i=1}^{m-b} \| \bar{v} - v_{(i)} \|^2 \\
&\leq f_r(x - \gamma \bar{v}) + \frac{L\gamma^2}{2(m-b)} \sum_{i=1}^{m-b} \| v_{(i)} \|^2.
\end{align*}

Thus, we have 
\begin{align*}
&f_r(x - \gamma \bar{\tilde{v}}) \\
&\leq \frac{1}{m-b} \sum_{i=1}^{m-b} f_r(x - \gamma v_{(i)}) + \frac{\beta \gamma^2}{2(m-b)} \sum_{i=1}^{m-b} \|v_{(i)}\|^2 \\
&\leq f_r(x - \gamma \bar{v}) + \frac{(L+\beta)\gamma^2}{2(m-b)} \sum_{i=1}^{m-b} \| v_{(i)} \|^2.
\end{align*}

Again, using the $L$-smoothness and taking $\gamma \leq \frac{1}{L}$, we have 
\begin{align*}
&f_r(x - \gamma \bar{v}) \\
&\leq f_r(x) + \ip{\nabla f_r(x)}{ -\gamma \bar{v}} + \frac{L\gamma^2}{2} \|\bar{v}\|^2 \\
&\leq f_r(x) + \ip{\nabla f_r(x)}{ -\gamma \bar{v}} + \frac{\gamma}{2} \|\bar{v}\|^2
\end{align*}

Thus, we have 
\begin{align*}
&f_r(x - \gamma \bar{\tilde{v}}) - f_r(x) \\
&\leq f_r(x - \gamma \bar{v}) - f_r(x) + \frac{(L+\beta)\gamma^2}{2(m-b)} \sum_{i=1}^{m-b} \| v_{(i)} \|^2 \\
&\leq \ip{\nabla f_r(x)}{ -\gamma \bar{v}} + \frac{\gamma}{2} \|\bar{v}\|^2 + \frac{(L+\beta)\gamma^2}{2(m-b)} \sum_{i=1}^{m-b} \| v_{(i)} \|^2.
\end{align*}

Conditional on $\tilde{v}_{(i)}$'s, taking expectation w.r.t. $f_r$ on both sides, we have 
\begin{align*}
&F(x - \gamma \bar{\tilde{v}}) - F(x) \\
&\leq \ip{\nabla F(x)}{ -\gamma \bar{v}} + \frac{\gamma}{2} \|\bar{v}\|^2 + \frac{(L+\beta)\gamma^2}{2(m-b)} \sum_{i=1}^{m-b} \| v_{(i)} \|^2 \\
&= -\frac{\gamma}{2} \| \nabla F(x) \|^2 + \frac{\gamma}{2} \| \nabla F(x) - \bar{v} \|^2 + \frac{(L+\beta)\gamma^2}{2(m-b)} \sum_{i=1}^{m-b} \| v_{(i)} \|^2.
\end{align*}

Now, taking the expectation w.r.t. $\tilde{v}_{(i)}$'s on both sides and using $\E\|v_{(i)}\|^2 \leq G$, we have 
\begin{align*}
&\E \left[ F(x - \gamma \bar{\tilde{v}}) \right] - F(x) \\
&\leq -\frac{\gamma}{2} \| \nabla F(x) \|^2 + \frac{\gamma}{2} \E \| \nabla F(x) - \bar{v} \|^2 + \frac{(L+\beta)\gamma^2}{2(m-b)} \sum_{i=1}^{m-b} \E\| v_{(i)} \|^2 \\
&\leq -\frac{\gamma}{2} \| \nabla F(x) \|^2 + \frac{\gamma}{2} \E \| \nabla F(x) - \bar{v} \|^2 + \frac{(L+\beta)\gamma^2 G}{2}.
\end{align*}

Now we just need to bound $\E \| \nabla F(x) - \bar{v} \|^2$. For convenience, we denote $g = \nabla F(x)$.
Note that for arbitrary subset $\mathcal{S} \subseteq [m-q]$, $|\mathcal{S}| = m-b$, we have the following bound:
\begin{align*}
&\E\left\| \frac{\sum_{i \in \mathcal{S}} (v_i-g)}{m-b } \right\|^2 \\
&= \E\left\| \frac{\sum_{i \in [m-q]} (v_i-g) - \sum_{i \notin \mathcal{S}} (v_i-g)}{m-b} \right\|^2 \\
&\leq 2\E\left\| \frac{\sum_{i \in [m-q]} (v_i-g)}{m-b} \right\|^2 + 2\E\left\| \frac{\sum_{i \notin \mathcal{S}} (v_i-g)}{m-b} \right\|^2 \\
&= \frac{2(m-q)^2}{(m-b)^2} \E\left\| \frac{\sum_{i \in [m-q]} (v_i-g)}{m-q} \right\|^2 + \frac{2(b-q)^2}{(m-b)^2} \E\left\| \frac{\sum_{i \notin \mathcal{S}} (v_i-g)}{b-q} \right\|^2 \\
&\leq \frac{2(m-q)^2}{(m-b)^2} \frac{V}{m-q} + \frac{2(b-q)^2}{(m-b)^2} \frac{\sum_{i \in [m-q]} \|v_i-g\|^2}{b-q} \\
&\leq \frac{2(m-q)^2}{(m-b)^2} \frac{V}{m-q} + \frac{2(b-q)^2}{(m-b)^2} \frac{(m-q) V}{b-q} \\
&= \frac{2(b-q + 1)(m-q)V}{(m-b)^2}.
\end{align*}

Putting all the ingredients together, we obtain the desired result
\begin{align*}
&\E \left[ F(x - \gamma \bar{\tilde{v}}) \right] - F(x) \\
&\leq -\frac{\gamma}{2} \| \nabla F(x) \|^2 + \frac{\gamma(b-q + 1)(m-q)V}{(m-b)^2} + \frac{(L+\beta)\gamma^2 G}{2}.
\end{align*}

\end{proof}

\setcounter{corollary}{0}
\begin{corollary}
Take $\gamma = \frac{1}{L \sqrt{T}}$, and $\rho = \frac{\beta\gamma^2}{2}$, where $\beta$ is the same as in Theorem~\ref{thm:byz_step}. Using $\zeno$, after $T$ iterations, we have
\begin{align*}
&\frac{\sum_{t=0}^{T-1} \E \| \nabla F(x^t) \|^2}{T} \\
&\leq \left[ 2L \left( F(x^0) - F(x^*) \right) + \frac{(L + \beta)G}{L} \right] \frac{1}{\sqrt{T}} + \frac{2(b-q + 1)(m-q)V}{(m-b)^2} \\
&= \mathcal{O}\left( \frac{1}{\sqrt{T}} \right) + \mathcal{O} \left( \frac{(b-q + 1)(m-q)}{(m-b)^2} \right).
\end{align*}
\end{corollary}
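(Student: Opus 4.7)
The plan is to apply Theorem~\ref{thm:byz_step} pointwise at each iteration $t$ with $x=x^t$ and $\bar{\tilde v}=\bar{\tilde v}^t$, then telescope. Because $\gamma\leq 1/L$ is required by the theorem and indeed $1/(L\sqrt T)\leq 1/L$ for $T\geq 1$, the hypothesis is satisfied, so taking total expectation on both sides of the per-step inequality in Theorem~\ref{thm:byz_step} yields
\begin{align*}
\E[F(x^{t+1})]-\E[F(x^t)] \leq -\frac{\gamma}{2}\,\E\|\nabla F(x^t)\|^2+\frac{\gamma(b-q+1)(m-q)V}{(m-b)^2}+\frac{(L+\beta)\gamma^2 G}{2}.
\end{align*}

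Next I would sum this inequality for $t=0,\ldots,T-1$. The left side telescopes to $\E[F(x^T)]-F(x^0)$, and using $\E[F(x^T)]\geq F(x^*)$ on the left (the minimizer is assumed to exist) and rearranging produces
\begin{align*}
\frac{\gamma}{2}\sum_{t=0}^{T-1}\E\|\nabla F(x^t)\|^2 \leq F(x^0)-F(x^*)+\frac{T\gamma(b-q+1)(m-q)V}{(m-b)^2}+\frac{T(L+\beta)\gamma^2 G}{2}.
\end{align*}
Dividing through by $T\gamma/2$ gives an average-squared-gradient bound in terms of $\gamma$.

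The last step is to plug in $\gamma=1/(L\sqrt T)$. The optimization-gap term becomes $2L(F(x^0)-F(x^*))/\sqrt T$, the constant-variance term involving $G$ becomes $(L+\beta)G/(L\sqrt T)$, and the Zeno bias term $2(b-q+1)(m-q)V/(m-b)^2$ is independent of $T$. Collecting these gives exactly the explicit constants in the statement, and reading off orders in $T$, $m$, $b$, $q$ yields the $\mathcal O(1/\sqrt T)+\mathcal O((b-q+1)(m-q)/(m-b)^2)$ summary.

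The only subtle point, and the place I would be most careful, is justifying that the per-step bound of Theorem~\ref{thm:byz_step} applies iteration-by-iteration under the full randomness of the algorithm. This works because the $f_r^t$ samples are drawn independently at each iteration after the candidate gradients $\tilde v_i^t$ arrive, so conditioning on the history up to iteration $t$ puts us in exactly the setting of Theorem~\ref{thm:byz_step}; the tower property then lets us replace the conditional expectation on the left with an unconditional one and telescope. Beyond this point, the argument is a routine descent-lemma telescoping with no further obstacles.
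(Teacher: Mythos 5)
Your proposal is correct and follows essentially the same route as the paper's own proof: apply Theorem~\ref{thm:byz_step} at each step, telescope, bound $\E[F(x^T)]$ below by $F(x^*)$, and substitute $\gamma = \frac{1}{L\sqrt{T}}$. Your added remark on why the per-step bound composes across iterations (independence of the $f_r^t$ samples and the tower property) is a point the paper leaves implicit, but it does not change the argument.
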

\begin{proof}
Taking $x = x^t$, $x - \gamma \zeno_b(\{\tilde{v}_i: i \in [m]\}) = x^{t+1}$, 
using Theorem~\ref{thm:byz_step}, we have 
\begin{align*}
&\E \left[ F(x^{t+1}) \right] - F(x^t) \\
&\leq -\frac{\gamma}{2} \| \nabla F(x^t) \|^2 + \frac{\gamma(b-q + 1)(m-q)V}{(m-b)^2} + \frac{(L+\beta)\gamma^2 G}{2}.
\end{align*}
By telescoping and taking total expectation, we have 
\begin{align*}
&\E \left[ F(x^T) \right] - F(x^0) \\
&\leq -\frac{\gamma}{2} \sum_{t=0}^{T-1} \E \| \nabla F(x^t) \|^2 + \frac{\gamma(b-q + 1)(m-q)V T}{(m-b)^2} + \frac{(L+\beta)\gamma^2 G T}{2}.
\end{align*}
Taking $\gamma = \frac{1}{L \sqrt{T}}$, we have
\begin{align*}
&\frac{\sum_{t=0}^{T-1} \E \| \nabla F(x^t) \|^2}{T} \\
&\leq \frac{2L \left[ F(x^0) - F(x^T) \right]}{\sqrt{T}} + \frac{2(b-q + 1)(m-q)V}{(m-b)^2} + \frac{(L + \beta)G}{L \sqrt{T}} \\
&\leq \frac{2L \left[ F(x^0) - F(x^*) \right]}{\sqrt{T}} + \frac{2(b-q + 1)(m-q)V}{(m-b)^2} + \frac{(L + \beta)G}{L \sqrt{T}} \\
&= \mathcal{O}\left( \frac{1}{\sqrt{T}} \right) + \mathcal{O} \left( \frac{(b-q + 1)(m-q)}{(m-b)^2} \right).
\end{align*}

\end{proof}

\begin{corollary}
Assume that 
\begin{align*}
F(x) = \frac{1}{m} \sum_{i \in [m]} \E \left[ F_i(x) \right],
\end{align*}
and 
\begin{align*}
\E \left[ F_i(x) \right] \neq \E \left[ F_j(x) \right],
\end{align*}
for $\forall i, j \in [m]$, $i \neq j$. For the stochastic descendant score, we still have $\E \left[ f_r(x) \right] = F(x)$. Assumption~\ref{asm:score}, \ref{asm:smooth_cvx}, and \ref{asm:variance} still hold.
Take $\gamma = \frac{1}{L \sqrt{T}}$, and $\rho = \frac{\beta\gamma^2}{2}$, where $\beta$ is the same as in Theorem~\ref{thm:byz_step}. Using $\zeno$, after $T$ iterations, we have
\begin{align*}
&\frac{\sum_{t=0}^{T-1} \E \| \nabla F(x^t) \|^2}{T} \\
&\leq \frac{2L \left[ F(x^0) - F(x^*) \right]}{\sqrt{T}} +  \frac{4V}{m} + \frac{4bG}{m} + \frac{2b^2(m-q)G}{m^2(m-b)} + \frac{(L + \beta)G}{L \sqrt{T}} \\
&= \mathcal{O}\left( \frac{1}{\sqrt{T}} \right) + \mathcal{O}\left(\frac{b}{m}\right) + \mathcal{O}\left(\frac{b^2(m-q)}{m^2(m-b)}\right).
\end{align*}
\end{corollary}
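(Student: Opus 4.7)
The plan is to retrace the proof of Theorem~\ref{thm:byz_step} and isolate the single place where the i.i.d. hypothesis is used. Every step that precedes the bound on $\E\|\nabla F(x)-\bar{v}\|^2$---the invocation of Lemma~\ref{lem:score}, the $\mu$-weak-convexity and $L$-smoothness of $f$, and the use of Assumption~\ref{asm:score} to exchange $f_r$ for $F$---depends only on structural inequalities for $f$ and the unbiasedness of $f_r$, neither of which requires the correct $v_i$'s to share a common expectation. Hence the same derivation yields
\begin{align*}
\E[F(x-\gamma\bar{\tilde{v}})]-F(x) \le -\tfrac{\gamma}{2}\|\nabla F(x)\|^2 + \tfrac{\gamma}{2}\E\|\nabla F(x)-\bar{v}\|^2 + \tfrac{(L+\beta)\gamma^2 G}{2},
\end{align*}
with $\bar{v}=\frac{1}{m-b}\sum_{i=1}^{m-b}v_{(i)}$ the average of the top-$(m-b)$ correct vectors by stochastic descendant score.

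Write $g=\nabla F(x)$ and $g_i=\nabla F_i(x)$, so that the disjoint-data hypothesis reads $g=\frac{1}{m}\sum_{i\in[m]}g_i$ while $\E[v_i]=g_i\ne g$ in general. The key move is the splitting
\begin{align*}
v_i-g = \underbrace{(v_i-g_i)}_{\text{mean-zero, variance }\le V} + \underbrace{(g_i-g)}_{\text{deterministic, sums to }0\text{ over }[m]},
\end{align*}
followed by $\|\bar v-g\|^2\le 2\|\tfrac{1}{m-b}\sum_{i\in S}(v_i-g_i)\|^2+2\|\tfrac{1}{m-b}\sum_{i\in S}(g_i-g)\|^2$, where $S\subseteq[m-q]$ indexes the $m-b$ correct gradients with the highest scores. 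The stochastic piece is controlled by replaying Theorem~\ref{thm:byz_step}'s trick on the centred residuals $v_i-g_i$: write $\sum_{i\in S}=\sum_{i\in[m-q]}-\sum_{i\in[m-q]\setminus S}$ and apply Cauchy-Schwarz, obtaining a bound that is \emph{uniform in} $S$ and therefore unaffected by the random, score-induced choice of $S$. For the deterministic piece I would use $\sum_{i\in S}(g_i-g)=-\sum_{i\in[m]\setminus S}(g_i-g)$---an identity that exploits $\sum_{i\in[m]}(g_i-g)=0$---to rewrite the sum over the complement $[m]\setminus S$, which has exactly $b$ elements. A second Cauchy-Schwarz step, together with $\|g_i\|^2\le G$ (a consequence of Assumption~\ref{asm:variance} via $\|g_i\|^2=\|\E v_i\|^2\le\E\|v_i\|^2$), then produces a bias bound depending only on $b$, $m$, $q$, and $G$, again uniform in $S$.

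Once $\E\|\nabla F(x)-\bar{v}\|^2$ is bounded by a sum of the form $(\text{variance})\cdot V + (\text{bias})\cdot G$ whose orders collapse to the stated $O(b/m)+O(b^2(m-q)/(m^2(m-b)))$, the remainder is routine: substitute back into the per-iteration inequality, telescope over $t=0,\dots,T-1$, set $\gamma=1/(L\sqrt{T})$, and invoke $F(x^T)\ge F(x^*)$---an exact replay of Corollary~\ref{cor:fixed_lr}'s bookkeeping. The main obstacle is the randomness of $S$, which depends on the scores and hence on the $v_i$'s themselves, so direct expectation over $S$ is intractable; the resolution is the uniform-in-$S$ strategy that Theorem~\ref{thm:byz_step} already applies to the stochastic piece and which extends to the bias piece through the conservation identity $\sum_{i\in[m]}(g_i-g)=0$. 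Centring around $g_i$ rather than around the good-workers' mean $\tfrac{1}{m-q}\sum_{i\in[m-q]}v_i$ is the choice that keeps the residual bias a function of $b$ (the number of trimmed candidates) instead of $q$ (the adversarial count), matching the shape of the corollary's bound.
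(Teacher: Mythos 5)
Your overall architecture is the paper's: you correctly observe that everything in Theorem~\ref{thm:byz_step} up to the term $\E\|\nabla F(x)-\bar v\|^2$ survives the loss of a common mean, and that the telescoping at the end is a verbatim replay of Corollary~\ref{cor:fixed_lr}. The gap is in the one step that actually matters here: your decomposition of $\E\|\nabla F(x)-\bar v\|^2$ does not produce the stated bound. Centring at $g_i=\E[v_i]$ and keeping the $\tfrac{1}{m-b}$ normalization, your bias piece becomes $\bigl\|\tfrac{1}{m-b}\sum_{i\in[m]\setminus S}(g_i-g)\bigr\|^2\le \tfrac{b}{(m-b)^2}\sum_{i\in[m]\setminus S}\|g_i-g\|^2=\mathcal{O}\bigl(b^2G/(m-b)^2\bigr)$, and your stochastic piece, handled by the uniform-in-$S$ trick, becomes $\mathcal{O}\bigl((b-q+1)(m-q)V/(m-b)^2\bigr)$. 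Neither is dominated by the corollary's $\tfrac{4V}{m}+\tfrac{4bG}{m}+\tfrac{2b^2(m-q)G}{m^2(m-b)}$: in the regime the paper cares most about ($b$ close to $m$, only a handful of trusted candidates retained), your bias term scales like $m^2G/(m-b)^2$ while the claimed one stays $\mathcal{O}\bigl((m-q)G/(m-b)\bigr)$, and your variance term is $\Theta(V)$ or worse for $b=\Theta(m)$ while the claimed one is $V/m$. So the orders do not ``collapse to the stated'' form; they are genuinely larger.

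The paper avoids this by never renormalizing the random subset against its own mean. It writes
\begin{align*}
\nabla F(x)-\bar v = \Bigl(\tfrac{1}{m}\textstyle\sum_{i\in[m]}\E[\nabla F_i(x)]-\tfrac{1}{m}\sum_{i\in[m]}\nabla F_i(x)\Bigr)+\tfrac{1}{m}\textstyle\sum_{v\in\mathcal{S}_1}v+\bigl(\tfrac{1}{m}-\tfrac{1}{m-b}\bigr)\textstyle\sum_{i=1}^{m-b}v_{(i)},
\end{align*}
where $\mathcal{S}_1$ is the set of $b$ non-selected local gradients. The first term is a \emph{fixed} (non-score-dependent) average of all $m$ independent gradients, giving $V/m$ rather than a uniform-over-subsets bound; the second keeps the $1/m$ normalization over only $b$ summands, giving $\mathcal{O}(bG/m)$; and the renormalization mismatch is isolated in the third term, giving $\mathcal{O}\bigl(b^2(m-q)G/(m^2(m-b))\bigr)$. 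Your conservation identity $\sum_{i\in[m]}(g_i-g)=0$ is a reasonable instinct, but the place where the cancellation must be exploited is at scale $1/m$, not $1/(m-b)$; as written, your route proves a strictly weaker corollary.
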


\begin{proof}
Similar to the proof of Theorem~\ref{thm:byz_step}, we define $\bar{\tilde{v}} =  \zeno_b(\{\tilde{v}_i: i \in [m]\})$. Thus, reusing the proof in Theorem~\ref{thm:byz_step}, we have 
\begin{align*}
F(x - \gamma \bar{\tilde{v}}) - F(x) \leq -\frac{\gamma}{2} \| \nabla F(x) \|^2 + \frac{\gamma}{2} \| \nabla F(x) - \bar{v} \|^2 + \frac{(L+\beta)\gamma^2}{2(m-b)} \sum_{i=1}^{m-b} \| v_{(i)} \|^2.
\end{align*}

Now, taking the expectation w.r.t. $\tilde{v}_{(i)}$'s on both sides and using $\E\|v_{(i)}\|^2 \leq G$, we have 
\begin{align*}
&\E \left[ F(x - \gamma \bar{\tilde{v}}) \right] - F(x) \\
&\leq -\frac{\gamma}{2} \| \nabla F(x) \|^2 + \frac{\gamma}{2} \E \| \nabla F(x) - \bar{v} \|^2 + \frac{(L+\beta)\gamma^2}{2(m-b)} \sum_{i=1}^{m-b} \E\| v_{(i)} \|^2 \\
&\leq -\frac{\gamma}{2} \| \nabla F(x) \|^2 + \frac{\gamma}{2} \E \| \nabla F(x) - \bar{v} \|^2 + \frac{(L+\beta)\gamma^2 G}{2}.
\end{align*}

Now we just need to bound $\E \| \nabla F(x) - \bar{v} \|^2$. We define that $\mathcal{S}_1 = \{\nabla F_i(x): i \in [m]\} \setminus  \{v_{(i)}: i \in [m-b]\}$. Note that $|\mathcal{S}_1| = b$.
\begin{align*}
&\E \| \nabla F(x) - \bar{v} \|^2 \\
&= \E \left\| \frac{1}{m} \sum_{i \in [m]} \E \left[ \nabla F_i(x) \right] - \frac{1}{m-b} \sum_{i=1}^{m-b} v_{(i)} \right\|^2 \\
&\leq 2E \left\| \frac{1}{m} \sum_{i \in [m]} \E \left[ \nabla F_i(x) \right] - \frac{1}{m} \sum_{i=1}^{m-b} v_{(i)} \right\|^2 + 2E \left\| \frac{1}{m} \sum_{i=1}^{m-b} v_{(i)} - \frac{1}{m-b} \sum_{i=1}^{m-b} v_{(i)} \right\|^2 \\
&\leq 4E \left\| \frac{1}{m} \sum_{i \in [m]} \E \left[ \nabla F_i(x) \right] - \frac{1}{m} \sum_{i=1}^{m}  \nabla F_i(x) \right\|^2 + 4E \left\| \frac{1}{m} \sum_{v \in \mathcal{S}_1} v \right\|^2 + 2E \left\| \frac{1}{m} \sum_{i=1}^{m-b} v_{(i)} - \frac{1}{m-b} \sum_{i=1}^{m-b} v_{(i)} \right\|^2 \\
&\leq \frac{4V}{m} + \frac{4b^2}{m^2} E \left\| \frac{1}{b} \sum_{v \in \mathcal{S}_1} v \right\|^2 + 2E \left\| \frac{1}{m} \sum_{i=1}^{m-b} v_{(i)} - \frac{1}{m-b} \sum_{i=1}^{m-b} v_{(i)} \right\|^2 \\
&\leq \frac{4V}{m} + \frac{4b^2}{m^2} \frac{mG}{b} + 2 \left[ \frac{1}{m} - \frac{1}{m-b} \right]^2 E \left\| \sum_{i=1}^{m-b} v_{(i)} \right\|^2 \\
&\leq \frac{4V}{m} + \frac{4bG}{m} + 2 \left[ \frac{b}{m(m-b)} \right]^2 (m-b)(m-q)G \\
&\leq \frac{4V}{m} + \frac{4bG}{m} + \frac{2b^2(m-q)G}{m^2(m-b)}.
\end{align*}

Thus, we have 
\begin{align*}
&\E \left[ F(x - \gamma \bar{\tilde{v}}) \right] - F(x) \\
&\leq -\frac{\gamma}{2} \| \nabla F(x) \|^2 + \frac{\gamma}{2} \left[ \frac{4V}{m} + \frac{4bG}{m} + \frac{2b^2(m-q)G}{m^2(m-b)} \right] + \frac{(L+\beta)\gamma^2 G}{2}.
\end{align*}

Follow the same procedure in Corollary~\ref{cor:fixed_lr}, taking $\gamma = \frac{1}{L \sqrt{T}}$, we have
\begin{align*}
&\frac{\sum_{t=0}^{T-1} \E \| \nabla F(x^t) \|^2}{T} \\
&\leq \frac{2L \left[ F(x^0) - F(x^*) \right]}{\sqrt{T}} +  \frac{4V}{m} + \frac{4bG}{m} + \frac{2b^2(m-q)G}{m^2(m-b)} + \frac{(L + \beta)G}{L \sqrt{T}} \\
&= \mathcal{O}\left( \frac{1}{\sqrt{T}} \right) + \mathcal{O}\left(\frac{b}{m}\right) + \mathcal{O}\left(\frac{b^2(m-q)}{m^2(m-b)}\right).
\end{align*}

\end{proof}

\section{Additional Experiments}

\begin{figure*}[htb!]
\centering
\subfigure[Top-1 accuracy on testing set, with $q=8$]{\includegraphics[width=0.49\textwidth]{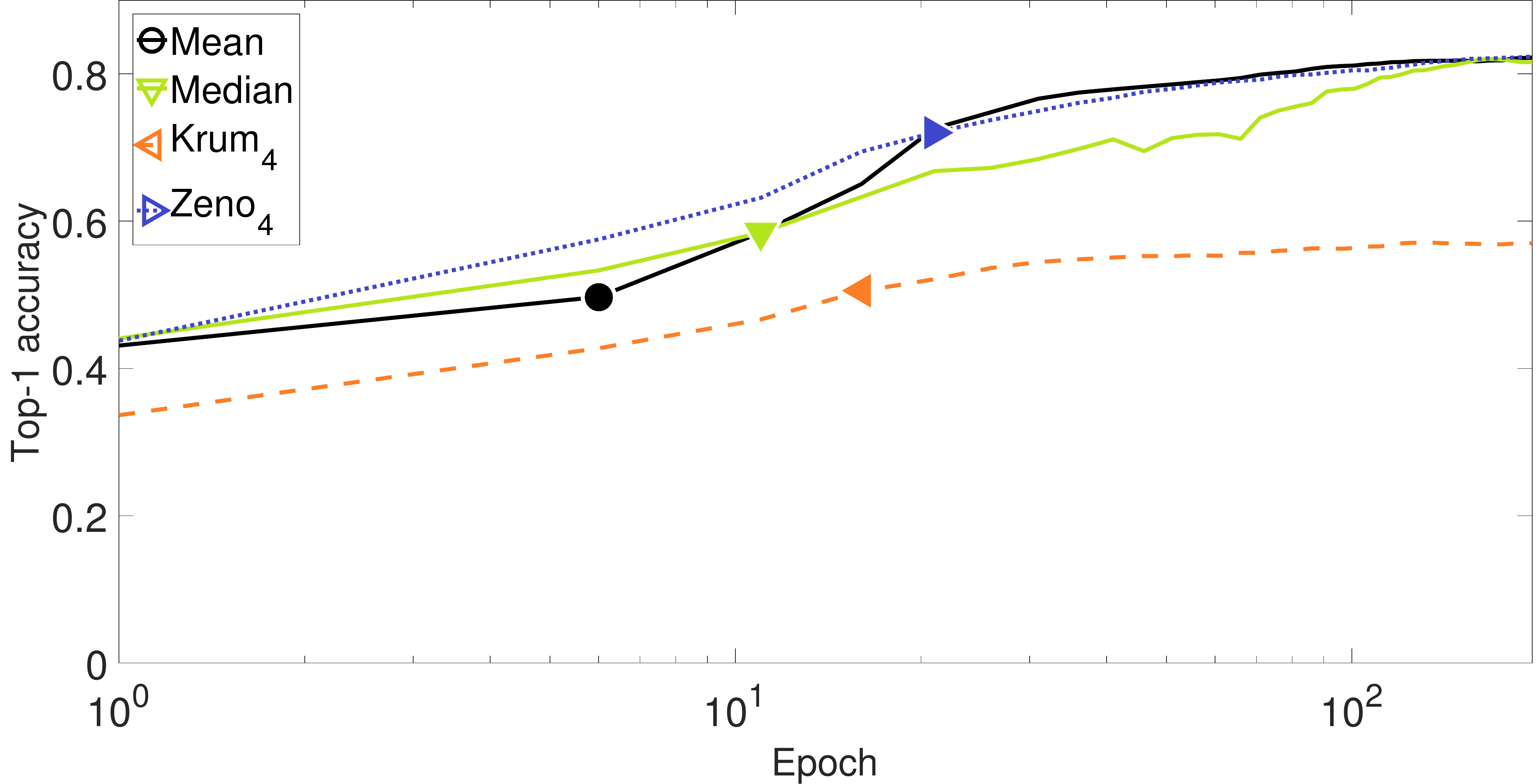}}
\subfigure[Cross entropy on training set, with $q=8$]{\includegraphics[width=0.49\textwidth]{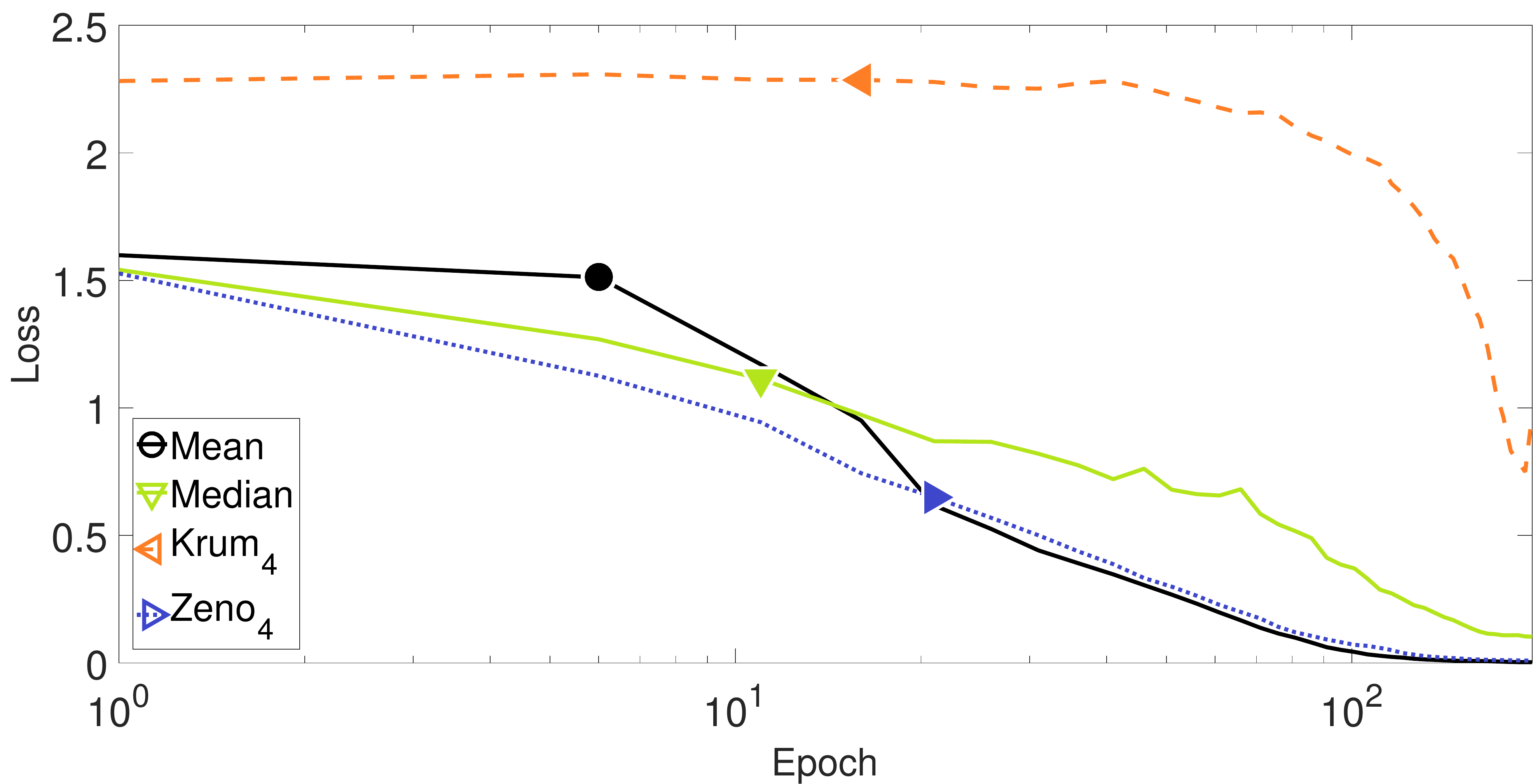}}
\caption{Convergence on non-i.i.d. training data, without failures. Batch size on the workers is $100$. Batch size of \texttt{Zeno} is $n_r=4$. $\rho=0.0005$. Learning rate $\gamma = 0.05$.  Each epoch has 25 iterations.}
\label{fig:noniid_nobyz}
\end{figure*}

\begin{figure*}[htb!]
\centering
\subfigure[Top-1 accuracy on testing set, with $q=8$]{\includegraphics[width=0.49\textwidth]{noniid_labelflip_8_acc_top1}}
\subfigure[Cross entropy on training set, with $q=8$]{\includegraphics[width=0.49\textwidth]{noniid_labelflip_8_loss}}
\subfigure[Top-1 accuracy on testing set, with $q=12$]{\includegraphics[width=0.49\textwidth]{noniid_labelflip_12_acc_top1}}
\subfigure[Cross entropy on training set, with $q=12$]{\includegraphics[width=0.49\textwidth]{noniid_labelflip_12_loss}}
\caption{Convergence on non-i.i.d. training data, with label-flipping failures. Batch size on the workers is $100$. Batch size of \texttt{Zeno} is $n_r=4$. $\rho=0.0005$. Learning rate $\gamma = 0.05$.  Each epoch has 25 iterations.}
\label{fig:noniid_labelflip_2}
\end{figure*}

\begin{figure*}[htb!]
\centering
\subfigure[Top-1 accuracy on testing set, with $q=8$]{\includegraphics[width=0.49\textwidth]{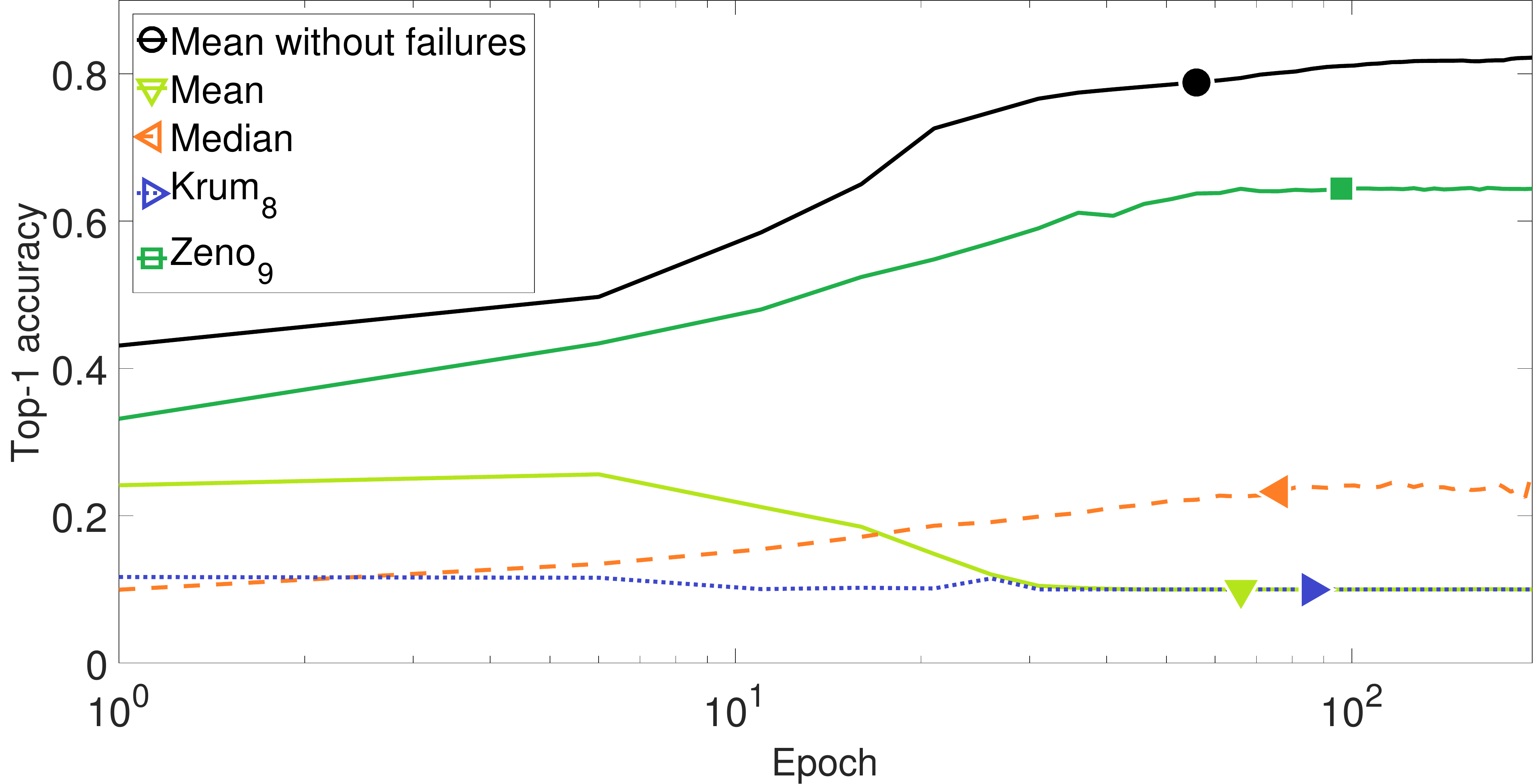}}
\subfigure[Cross entropy on training set, with $q=8$]{\includegraphics[width=0.49\textwidth]{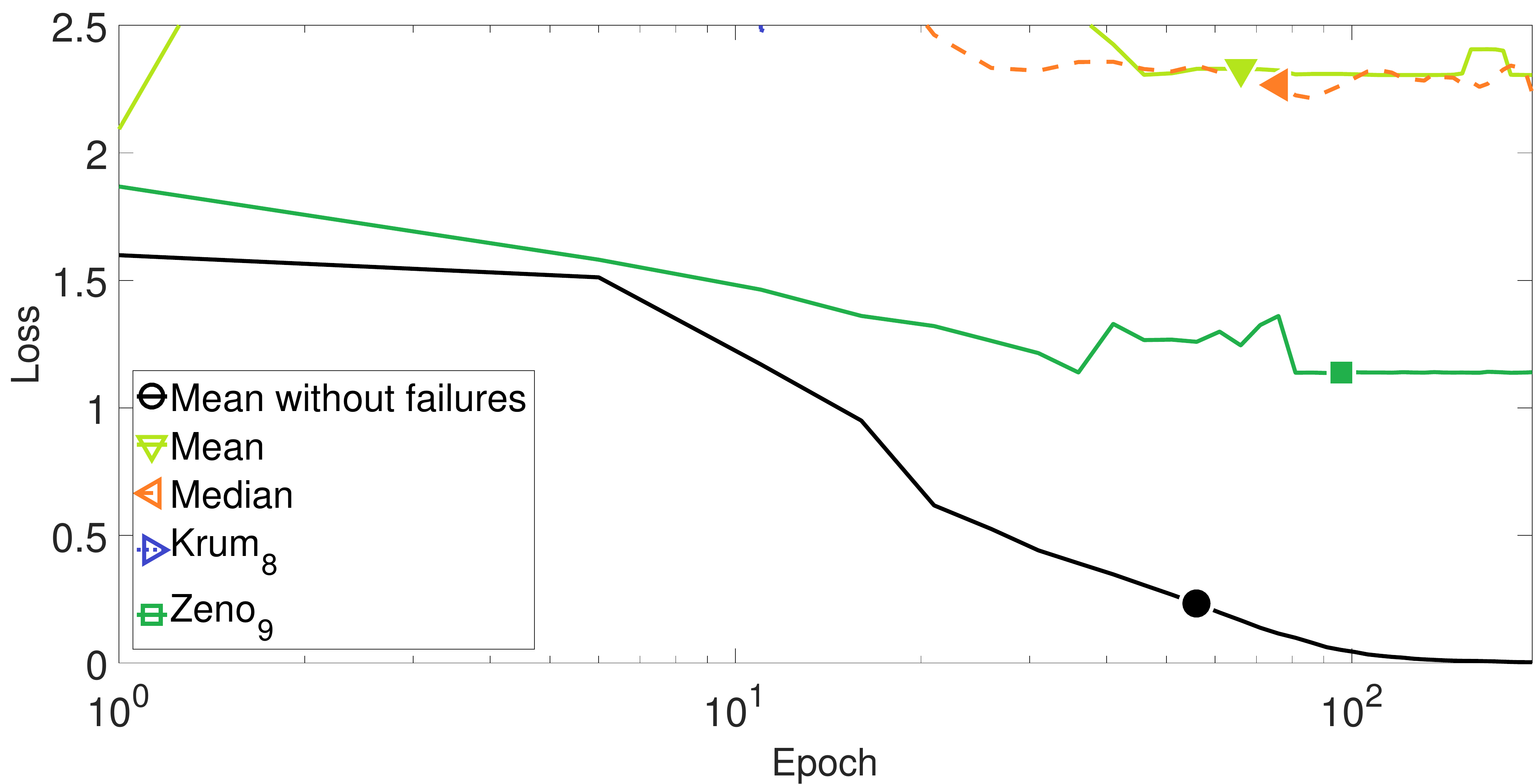}}
\subfigure[Top-1 accuracy on testing set, with $q=12$]{\includegraphics[width=0.49\textwidth]{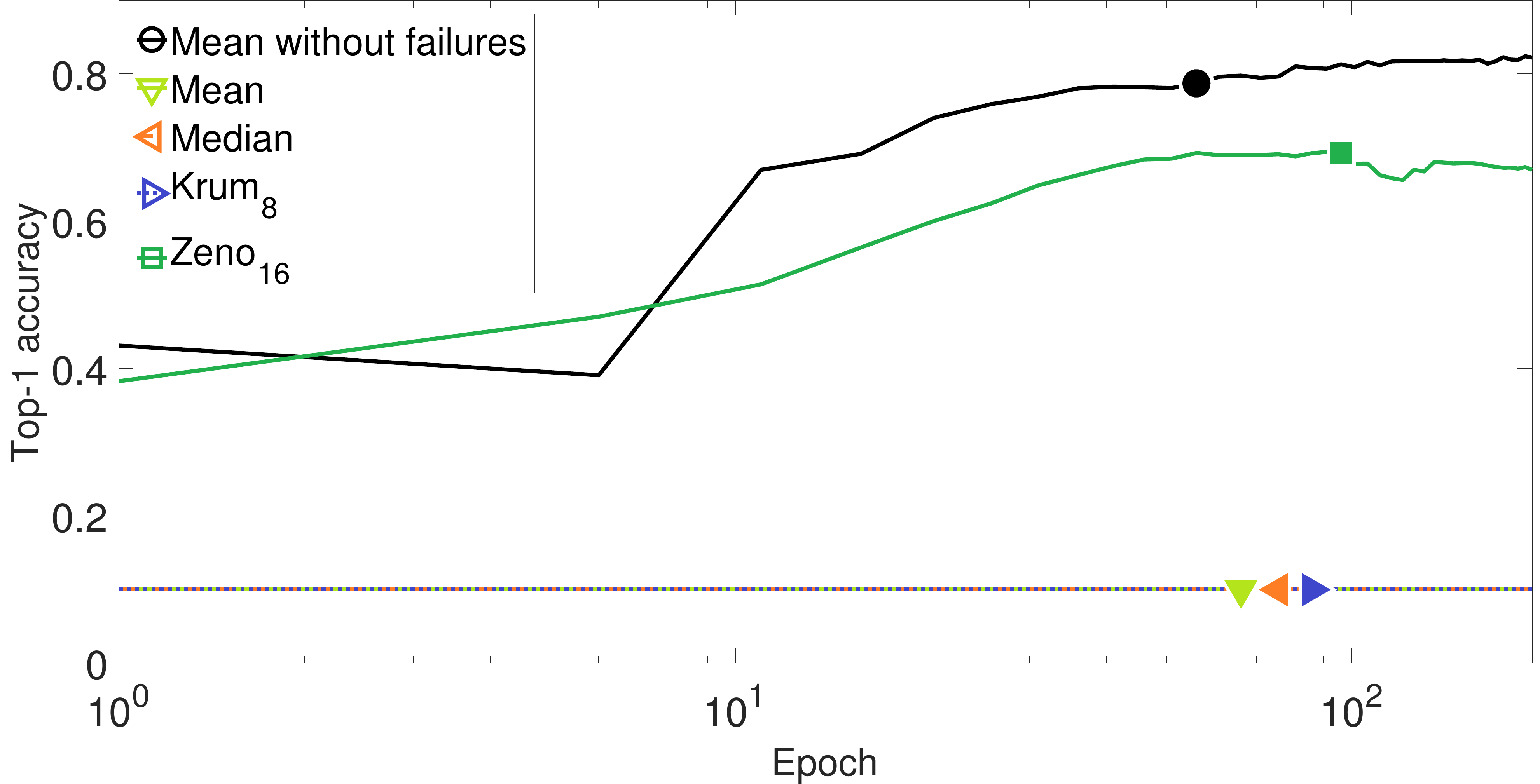}}
\subfigure[Cross entropy on training set, with $q=12$]{\includegraphics[width=0.49\textwidth]{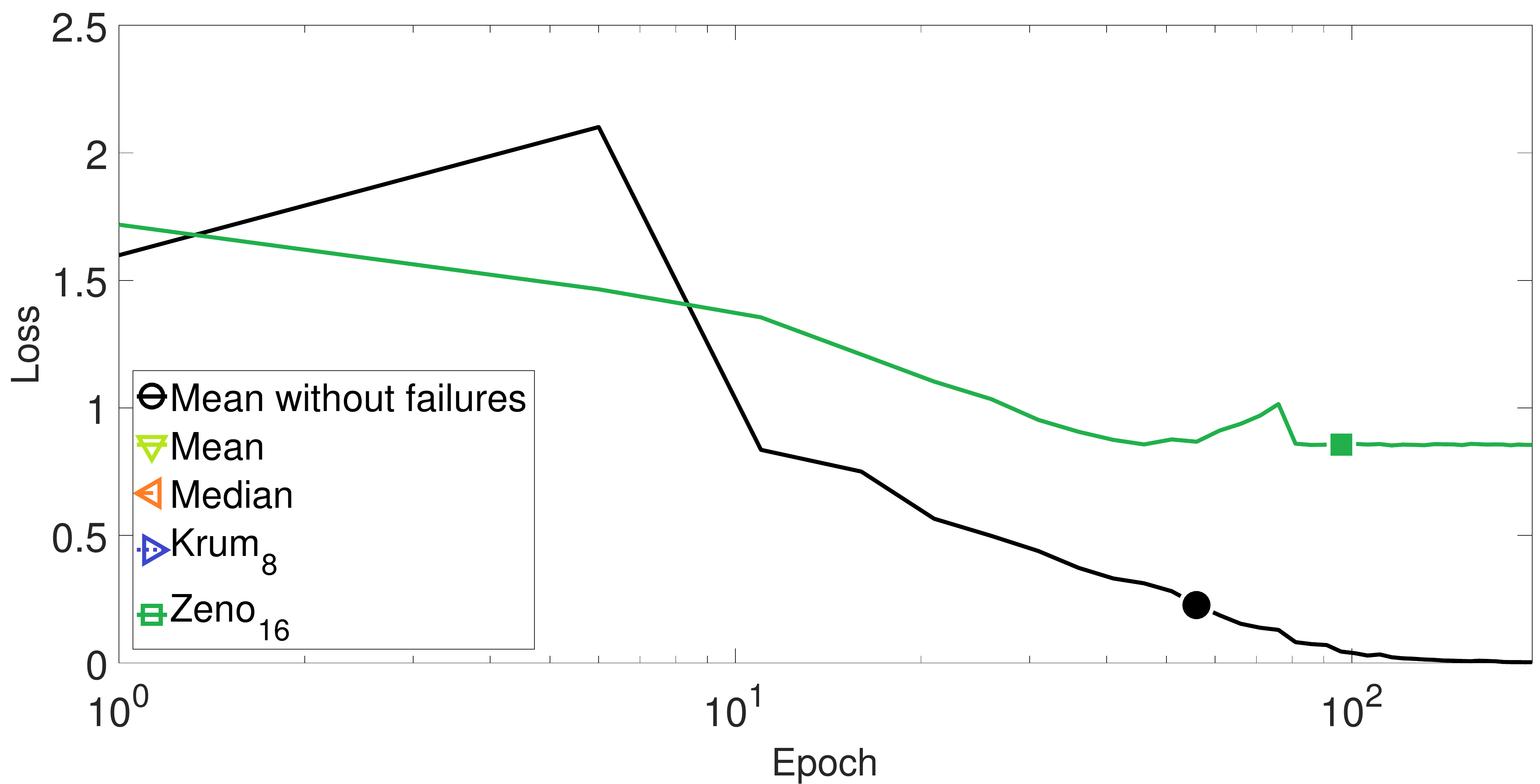}}
\caption{Convergence on non-i.i.d. training data, with bit-flipping failures. Batch size on the workers is $100$. Batch size of \texttt{Zeno} is $n_r=4$. $\rho=0.0005$. Learning rate $\gamma = 0.05$.  Each epoch has 25 iterations.}
\label{fig:noniid_signflip}
\end{figure*}

\begin{figure*}[htb!]
\centering
\subfigure[Top-1 accuracy on testing set, with $q=8$]{\includegraphics[width=0.49\textwidth]{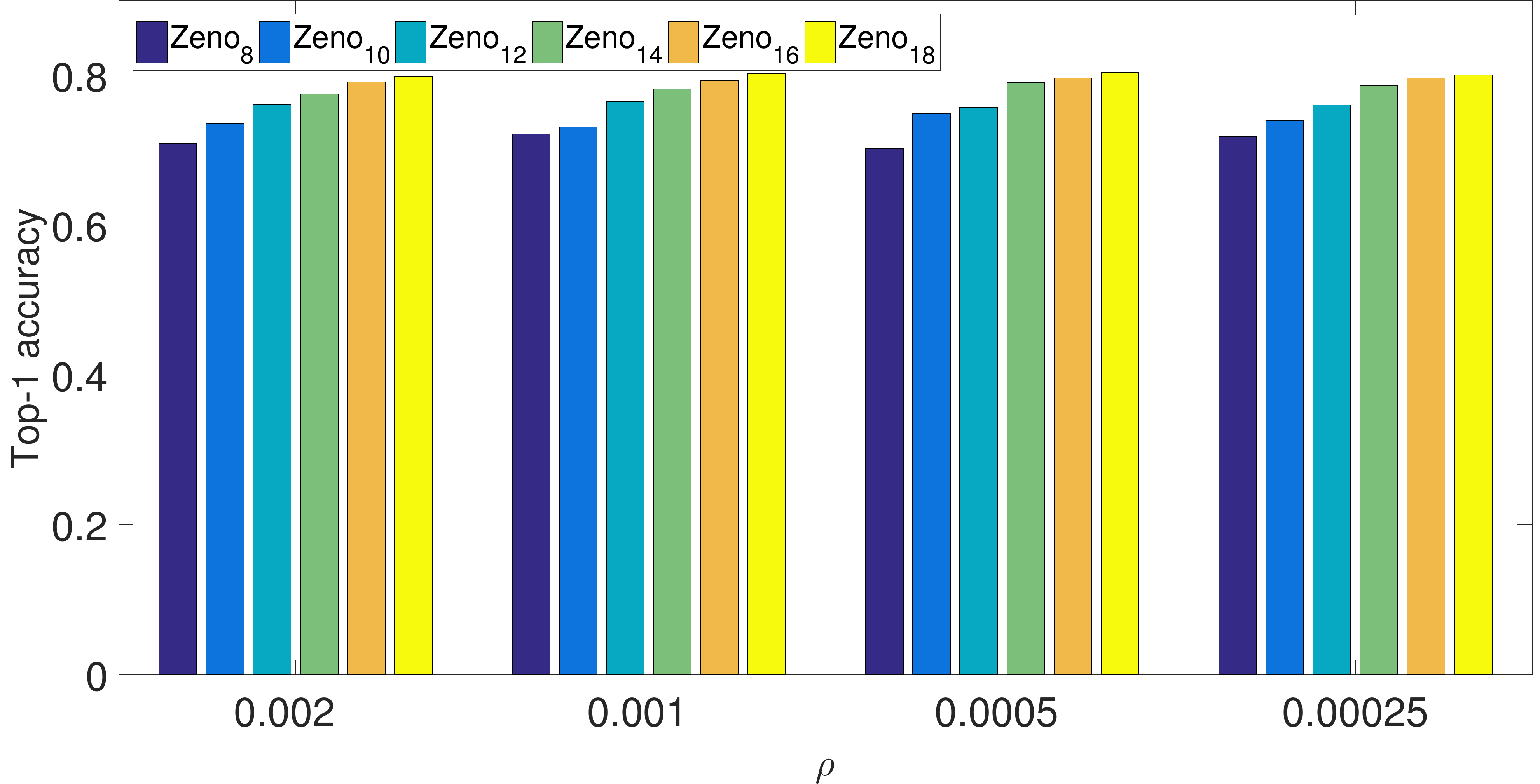}}
\subfigure[Cross entropy on training set, with $q=8$]{\includegraphics[width=0.49\textwidth]{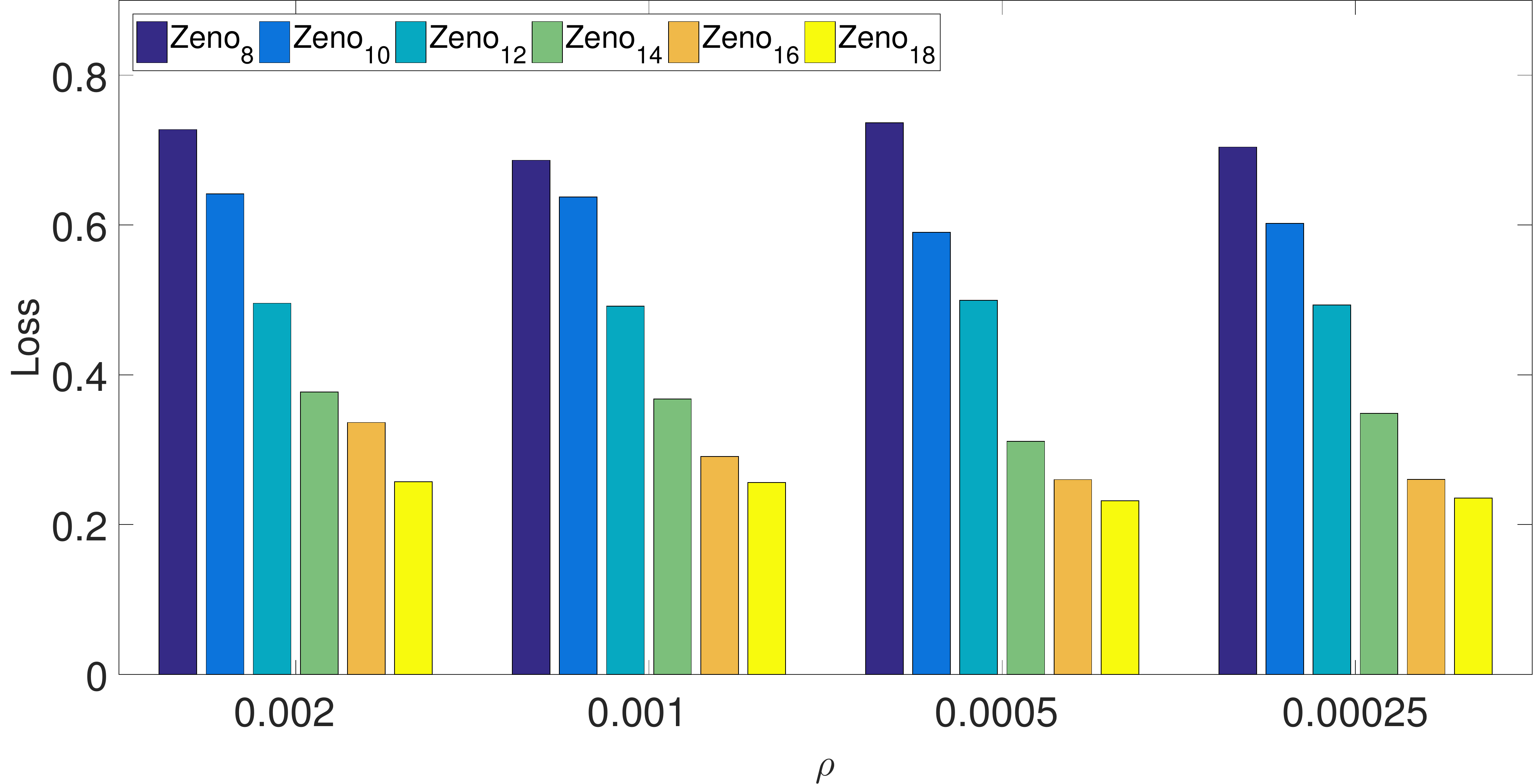}}
\caption{Convergence on i.i.d. training data, with label-flipping failures, $q=8$. Batch size on the workers is $100$. Batch size of \texttt{Zeno} is $n_r=4$. Learning rate $\gamma = 0.1$.  Each epoch has 25 iterations. $\rho$ and $b$ are tuned.}
\label{fig:rho_b}
\end{figure*}

\end{document}